\documentclass{article}

\usepackage{microtype}
\usepackage{graphicx}
\usepackage{subcaption}
\usepackage{enumitem}
\usepackage{booktabs}
\usepackage{bbm}

\usepackage{multirow}

\makeatletter
\newcommand\footnoteref[1]{\protected@xdef\@thefnmark{\ref{#1}}\@footnotemark}
\makeatother

\usepackage{hyperref}

\usepackage[table]{xcolor}
\usepackage{arydshln}

\usepackage[accepted]{icml2026}

\usepackage{amsmath}
\usepackage{amssymb}
\usepackage{mathtools}
\usepackage{amsthm}

\usepackage[capitalize,noabbrev]{cleveref}

\theoremstyle{plain}
\newtheorem{theorem}{Theorem}[section]

\newtheorem{lemma}[theorem]{Lemma}

\theoremstyle{definition}
\newtheorem{definition}[theorem]{Definition}
\newtheorem{assumption}[theorem]{Assumption}
\theoremstyle{remark}
\newtheorem{remark}[theorem]{Remark}

\usepackage[textsize=tiny]{todonotes}

\icmltitlerunning{CAffNet: Hard Constraint-Affine Neural Networks}

\begin{document}

\twocolumn[
  \icmltitle{CAffNet: Hard Constraint-Affine Neural Networks}

  % It is OKAY to include author information, even for blind submissions: the
  % style file will automatically remove it for you unless you've provided
  % the [accepted] option to the icml2026 package.

  % List of affiliations: The first argument should be a (short) identifier you
  % will use later to specify author affiliations Academic affiliations
  % should list Department, University, City, Region, Country Industry
  % affiliations should list Company, City, Region, Country

  % You can specify symbols, otherwise they are numbered in order. Ideally, you
  % should not use this facility. Affiliations will be numbered in order of
  % appearance and this is the preferred way.
  \icmlsetsymbol{equal}{*}

  \begin{icmlauthorlist}
    \icmlauthor{Yang Zhao}{NEU}
    \icmlauthor{Jungeun Lee}{UNIST}
    \icmlauthor{Jeong hwan Jeon}{UNIST}
    \icmlauthor{Sze Zheng Yong}{NEU}
  \end{icmlauthorlist}

  \icmlaffiliation{NEU}{Department of Mechanical and Industrial Engineering, Northeastern University, Boston, MA 02115 USA}
  \icmlaffiliation{UNIST}{Department of Electrical Engineering, Ulsan National Institute of Science and Technology, Ulsan 44919, Republic of Korea}
  \icmlcorrespondingauthor{Sze Zheng Yong}{s.yong@northeastern.edu}

  % You may provide any keywords that you find helpful for describing your
  % paper; these are used to populate the "keywords" metadata in the PDF but
  % will not be shown in the document
  \icmlkeywords{Machine Learning, ICML}

  \vskip 0.3in
]

% this must go after the closing bracket ] following \twocolumn[ ...

% This command actually creates the footnote in the first column listing the
% affiliations and the copyright notice. The command takes one argument, which
% is text to display at the start of the footnote. The \icmlEqualContribution
% command is standard text for equal contribution. Remove it (just {}) if you
% do not need this facility.

% Use ONE of the following lines. DO NOT remove the command.
% If you have no special notice, KEEP empty braces:
\printAffiliationsAndNotice{}  % no special notice (required even if empty)
% Or, if applicable, use the standard equal contribution text:
% \printAffiliationsAndNotice{\icmlEqualContribution}

\begin{abstract}
We present a novel framework for embedding hard constraint satisfaction into neural network (NN) architectures, specifically feedforward neural networks and transformers, with input-dependent affine constraints of arbitrary cardinality. Traditional constraint enforcement approaches either rely on penalty-based soft constraints, which offer no guarantee of satisfaction, or on post-processing methods that enforce constraints after the NN is trained, which may lead to suboptimality. We introduce a trainable constraint-affine (CAffine) layer into NNs, yielding CAffNet, which goes beyond enforcing affine constraints via fixed orthogonal or parallel projections and enables joint optimization with network parameters. Moreover, we impose no restrictions on the constraint space dimensions and establish that our
construction preserves the universal approximation properties of NNs, while providing provable guarantees on constraint adherence for all inputs. Experimental validation demonstrates robust performance across diverse domains requiring guaranteed constraint satisfaction.
\end{abstract}

\section{Introduction}

\begin{table*}[t]
    \caption{Comparison of methods enforcing input-dependent constraints on neural networks for the target function $y=f(x)\in\mathbb{R}^{n_\textup{out}}$, based on the more detailed comparison in \citep[Table 1]{min2025hardnet}. Our proposed CAffNet guarantees hard constraint satisfaction with a closed-form framework that preserves the universal approximation property for an arbitrary number of input-dependent affine inequality constraints\footnoteref{note1}.
    }%

    \belowrulesep=0ex
    \aboverulesep=0ex
    \label{tab:comp}
    \centering
    \renewcommand{\arraystretch}{1.25}
    \newcommand{\colorpos}{\cellcolor[RGB]{230,255,230}}
    \newcommand{\colorsemipos}{\cellcolor[RGB]{245,255,255}}
    \newcommand{\colorneg}{\cellcolor[RGB]{255,230,230}}
    \setlength{\tabcolsep}{2pt}
    \small
    \begin{tabular}{c|c c c c c c c}
        \toprule
        \multirow{2}{*}{Method} & \multirow{2}{*}{Constraint} & Support & Satisfaction & \multirow{2}{*}{Computation} & Universal\\
        & & Eq./Ineq. & Guarantee & & Approx.\\
        \midrule
        \arrayrulecolor{black!20}

        Soft-Constrained & Any
        & \colorpos Both & \colorneg No & \colorpos Closed-Form & \colorpos Yes \\
        \hdashline

        \begin{tabular}{@{}c@{}}
        POLICE
        \citep{balestriero2023police}
        \end{tabular} &
        $y=Ax+b \;\; \forall x\in R$
        & \colorneg Eq. Only & \colorpos Always & \colorpos Closed-Form & \colorneg Unknown\\
        \hdashline

        \begin{tabular}{@{}c@{}}
        KKT-hPINN~\citep{chen2024physics}
        \end{tabular}
        &\begin{tabular}{@{}c@{}}
        $Ax+By=b$ (\# constraints $\leq n_\textup{out}$)
        \end{tabular}
        & \colorneg Eq. Only & \colorpos Always & \colorpos Closed-Form & \colorneg Unknown \\
        \hdashline

        \begin{tabular}{@{}c@{}}
        ACnet
        \citep{beucler2021acnet}
        \end{tabular}
        &
        \begin{tabular}{@{}c@{}}
        $h(x, y)=0$ (\# constraints $\leq n_\textup{out}$)
        \end{tabular}
        & \colorneg Eq. Only & \colorpos Always & \colorpos Closed-Form & \colorneg Unknown \\
        \hdashline

        \begin{tabular}{@{}c@{}}
        DC3 \citep{donti2021dc3}
        \end{tabular}
        &
        $g(x,y)\leq0, h(x,y)=0$
        & \colorpos Both & \colorneg Asymptotic
        & \colorneg Iterative & \colorneg Unknown \\
        \hdashline

        \begin{tabular}{@{}c@{}}
        FSNet \cite{nguyen2025fsnet}
        \end{tabular}
        & $g(x,y)\leq0, h(x,y)=0$
        & \colorpos Both & \colorneg Asymptotic & \colorneg Iterative & \colorneg Unknown\\
        \hdashline

        \begin{tabular}{@{}c@{}}
        HardNet-Aff~\cite{min2025hardnet}
        \end{tabular}
        & \begin{tabular}{@{}c@{}}
        $b^l(x)\!\leq\! A(x)y\!\leq\! b^u(x)$ (\# constraints $\!\leq\! 2n_\textup{out}$)
        \end{tabular}
        & \colorpos Both & \colorpos Always & \colorpos Closed-Form & \colorpos Yes\\
        \hdashline

        \begin{tabular}{@{}c@{}}
        CAffNet (Ours)
        \end{tabular}
        & $A(x)y\leq b(x)$ (arbitrary \# constraints)
        & \colorpos Both & \colorpos Always & \colorpos Closed-Form & \colorpos Yes\\
        \arrayrulecolor{black}
        \bottomrule
        \end{tabular}
\end{table*}

Neural networks have demonstrated remarkable capability in approximating complex nonlinear functions in high-dimensional spaces, making them attractive for a wide range of applications \citep{cybenko1989approximation,hornik1989multilayer}. In safety-critical domains such as autonomous vehicles, robotics, and aerospace systems, this expressive power enables learned controllers that can adapt to complex dynamics and uncertainties \citep{kuutti2020survey, brunke2022safe, emami2022neural}. However, ensuring that these learned controllers provide formal constraint satisfaction still remains a significant challenge \citep{brunke2022safe, lavanakul2024safety}.

A common approach to mitigate this issue is to incorporate soft constraints by adding a penalty term to the training objective, encouraging the network to produce safe outputs. While penalty-based methods can work well empirically, they generally do not provide hard safety guarantees, especially with limited training data or when encountering out-of-distribution edge cases at deployment \citep{marquez2017imposing, kolter2019learning, min2025hardnet}.

In contrast, hard-constrained methods were studied by enforcing the network output within the safe region. Frerix et al.~\yrcite{frerix2020homogeneous} enforce homogeneous linear inequality constraints $Ax \le 0$ by converting the feasible set from an H-representation to a V-representation of a polyhedral cone, and then restricting the network output to a combination of its rays via a constraint layer. However, this approach relies on the homogeneity of the constraints, and the hyperplane-to-vertex conversion can become computationally expensive as the output dimension and the number of constraints grow.
In addition, C-DGMs~\cite{stoian2024realistic} add a differentiable constraints layer that incrementally adjusts each output element by computing the corresponding greatest lower bound and least upper bound, and clamping the value into this interval, working for a more general type of (affine) constraints $Ax\leq b$.
Moreover, RAYEN \cite{tordesillas2023rayen} can satisfy different kinds of constraints, such as linear, quadratic, second-order cone, and linear matrix inequality, by adjusting the output vector length from the interior point of the feasible region.
However, these methods are only applicable when the constraints are input-independent.

Moving beyond input-independent constraints, recent work has begun to study hard constraint enforcement when the feasible set varies with the input (cf.~Table~\ref{tab:comp}). For input-dependent equality constraints, POLICE~\cite{balestriero2023police} provably enforces the output to be an affine function of the input within a convex region by modifying layer parameters based on its vertices, while KKT-hPINN \cite{chen2024physics} enforces hard linear equality constraints by inserting projection layers derived from KKT conditions, and ACnet \cite{beucler2021acnet} reformulates input-output nonlinear equality constraints to affine equality constraints.

On the other hand, for the more general case with input-dependent inequality constraints, DC3 \cite{donti2021dc3} uses a differentiable procedure to satisfy equality constraints and unrolls gradient-based corrections for inequality constraints.
However, the approach relies on iterative optimization, whose convergence and runtime can vary across problem instances and hyperparameter settings. A similar problem also arises in FSNet \cite{nguyen2025fsnet}, which solves an unconstrained optimization problem to minimize constraint violations iteratively. While this iterative procedure asymptotically drives violations toward zero, exact feasibility generally depends on running a sufficient number of iterations and meeting a tolerance.

\begin{figure}
    \centering
    \begin{subfigure}[t]{0.49\columnwidth}
        \centering
        \includegraphics[height=2.4cm]{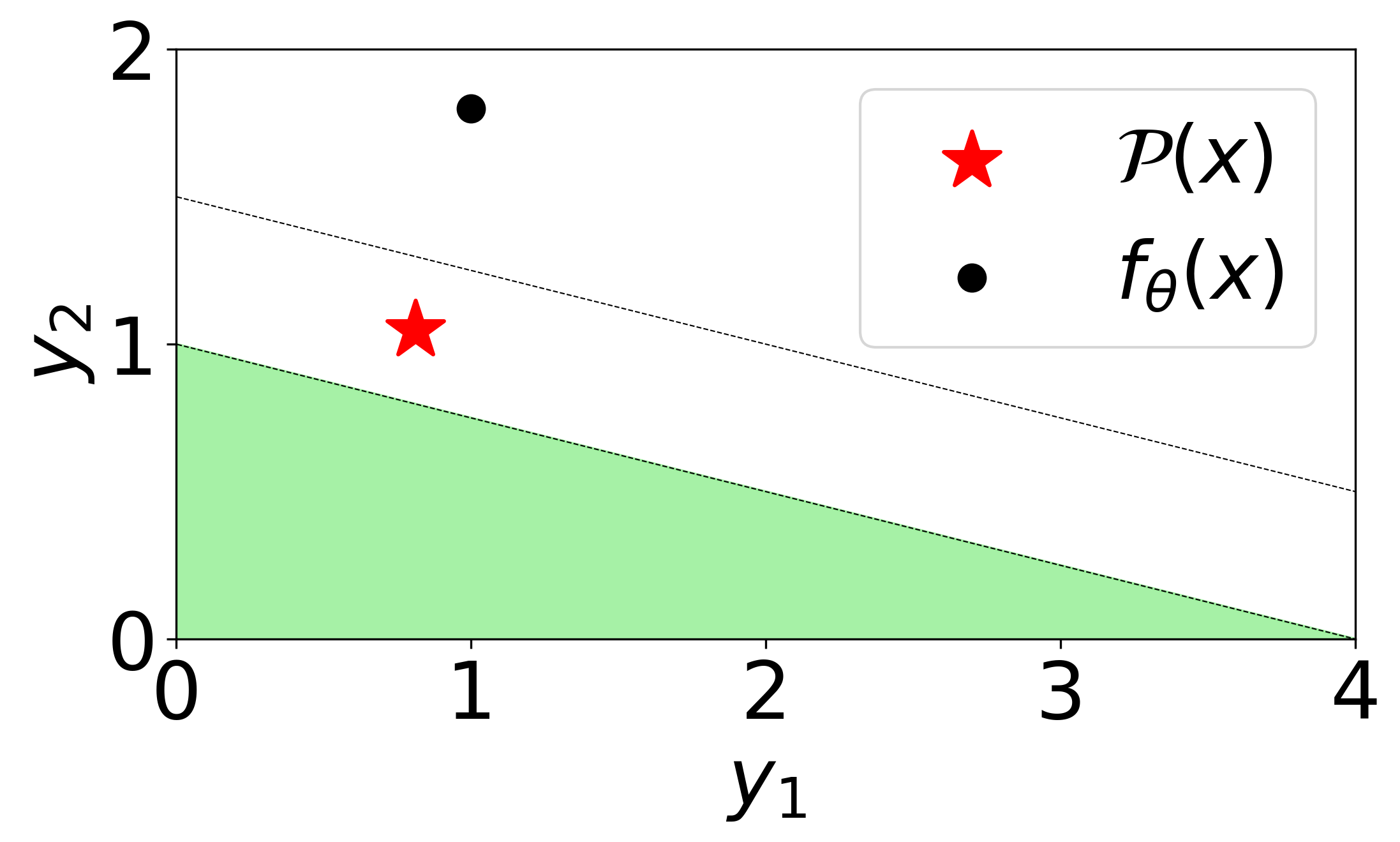}%
        \caption[]{\small HardNet projection.}
        \label{fig:redundant_hardnet}
    \end{subfigure}
\hfill
    \begin{subfigure}[t]{0.49\columnwidth}
        \centering
        \includegraphics[height=2.4cm]{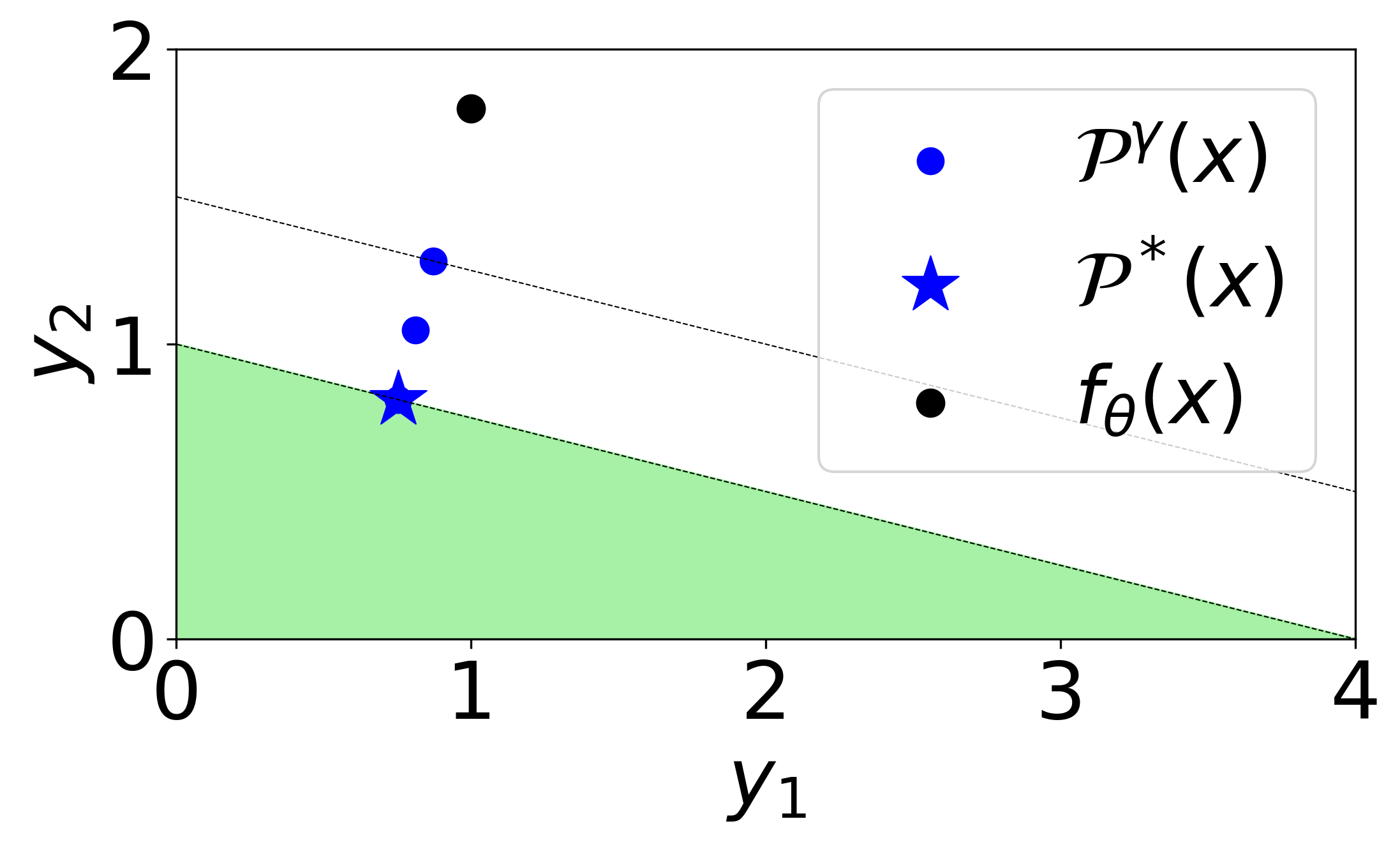}%
        \caption[]{\small CAffNet projection.}
        \label{fig:redundant_CAffNet}
    \end{subfigure}
\caption[]{\small Comparison of the projection results between HardNet \cite{min2025hardnet} and the proposed CAffNet in the presence of linearly dependent constraints, where HardNet projects outside of the feasible region while CAffNet finds a feasible projection.}
\label{fig:redundant_constraints}
\end{figure}

Furthermore, projection-based methods onto a single (hard) constraint with closed-form expressions have been studied~\cite{kolter2019learning, min2023data}.
It was then extended to multiple constraints by HardNet-Aff~\cite{min2025hardnet}, which projects the output of neural networks into the safe region of input-dependent affine constraints by adding correction terms to the output.
However, it assumes that $A(x)$ has full row rank. This requirement leads to two main limitations. One is that the number of (two-sided) constraints cannot exceed
the input dimensions, which is restrictive in real-world applications. Another limitation is that it cannot be applied to linearly dependent constraints (cf. Fig.~\ref{fig:redundant_hardnet}), where the projection may fall outside the feasible region. In contrast, our proposed CAffNet method addresses this issue as shown in Fig.~\ref{fig:redundant_CAffNet}.
Moreover, when $A(x)$ is not full column rank, the projection solution may not be unique, which is not considered in HardNet. For example, $\{(x, y) \in \mathbb{R}^2 \mid [0, 1] [x, y]^\top \leq [0, 0]^\top \}$ has infinitely many solutions as the line of $y=0$ satisfies the constraints. While HardNet only projects onto a specific point on this line, our proposed CAffNet introduces an additional trainable term to find the optimal projection onto this entire line
instead of simply using orthogonal or parallel projection. A comparison of related input-dependent methods is summarized in Table~\ref{tab:comp}. More comparisons with input-independent methods can be found in  \citep[Table 1]{min2025hardnet}.

Noting the above limitations, we propose CAffNet, which combines a constraint-affine (\underline{CAff}ine) layer with neural \underline{net}works, to address these issues by introducing a constraint decomposition method and a trainable projection module to handle an arbitrary number of constraints with provable feasibility guarantees and universal approximation property.

The main contributions of this paper are as follows:\\
1. \textbf{Closed-form hard constraint enforcement.} We propose CAffNet, a closed-form neural network framework that guarantees hard satisfaction of an arbitrary number of input-dependent affine inequality constraints\footnote{\label{note1} Note that equality constraints can be equivalently expressed using two sets of inequalities with $\le$ and $\ge$.} without full row rank assumptions.\\
2. \textbf{Robust projection via constraint decomposition.} A constraint decomposition with a trainable projection module is proposed to handle linearly dependent constraints and non-unique projections.\\
3. \textbf{Architectural versatility and superior empirical performance.} CAffNet can be instantiated with feedforward and transformer architectures, achieving zero constraint violations and reducing the MSE by $73.33\%$ compared to the soft-constrained NN.

\noindent\textbf{Conflict of Interest Disclosure.}
The authors declare no financial conflicts of interest or other substantive
conflicts that could reasonably be perceived to influence this work.

\section{Preliminaries}

\begin{figure*}[pt]
    \centering
    \includegraphics[width=1\linewidth]{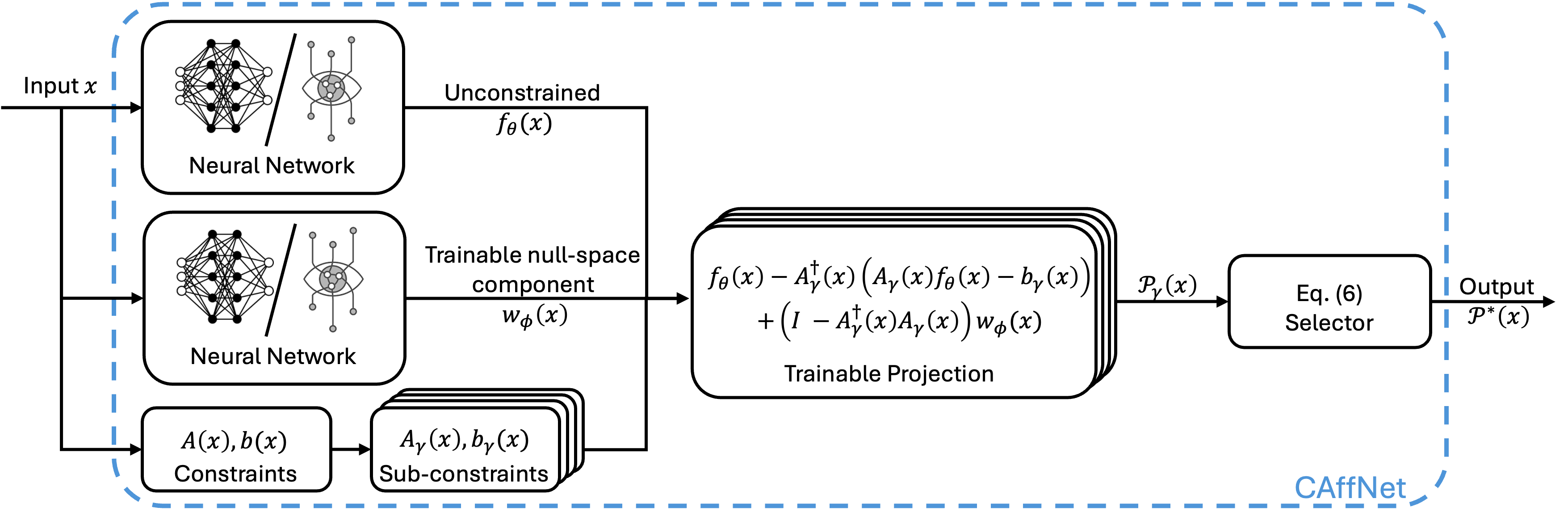}%
    \caption[]{\small CAffNet Framework. Sub-constraints and a trainable null-space component are considered to find the optimal output, which is guaranteed to be within the safe region defined by input-dependent affine constraints.}
    \label{fig:CaffNet_framework}
\end{figure*}

\subsection{Notation}
We denote by $\|x\|_p$ the $p$-norm for a vector $x \in \mathbb{R}^n$, and by $\|A\|_p$ the induced matrix norm for a matrix $A \in \mathbb{R}^{m \times n}$, where $p\in[1, \infty)$. We use $\mathcal{C}(\mathcal{X}, \mathcal{Y})$ to denote the space of continuous functions from $\mathcal{X}$ to $\mathcal{Y}$ endowed with the supremum norm $\|f\|_\infty := \sup_{x \in \mathcal{X}} \|f(x)\|_\infty$, and $L^p(\mathcal{X}, \mathcal{Y})$ to denote the class of $L^p$ functions from $\mathcal{X}$ to $\mathcal{Y}$ with the $L^p$ norm $\|f\|_p := \left(\int_{\mathcal{X}} \|f(x)\|_p^p dx\right)^{1/p}$. For a matrix $A$, we use $A^\dagger$ to denote its Moore-Penrose pseudoinverse. In addition, we use $I$ to denote the identity matrix of appropriate dimension, and $\mathbbm{1}_n$ to denote the vector of all ones in $\mathbb{R}^n$. Moreover, we use $\text{diag}(\cdot)$ to denote a diagonal matrix.

\subsection{Universal Approximation Theorem}

The Universal Approximation Theorem (UAT) is fundamental to understanding the theoretical limits of neural networks for function approximation.
It guarantees that, given sufficient width or depth, a neural network can approximate any continuous function to arbitrary accuracy.

\begin{definition}[Universal Approximator]
For function classes $\mathcal{F}_1, \mathcal{F}_2 \subset \mathcal{C}(\mathcal{X}, \mathcal{Y})$ (respectively, $\mathcal{F}_1, \mathcal{F}_2 \subset L^p(\mathcal{X}, \mathcal{Y})$), we say $\mathcal{F}_1$ universally approximates (or is dense in) $\mathcal{F}_2$ if for any $f_2 \in \mathcal{F}_2$ and $\epsilon > 0$, there exists $f_1 \in \mathcal{F}_1$ such that $\|f_2 - f_1\|_\infty \leq \epsilon$ (or more generally by norm equivalence, $\|f_2 - f_1\|_p \leq \epsilon$).
\end{definition}

This property is well-established for unconstrained feedforward neural networks and transformers. For instance, a deep feedforward neural network with ReLU activations can universally approximate any $L^p$ function with width $w \ge \max(n_{in}+1,n_{out})$ \cite{park2021minimum} while a transformer can universally approximate it with 2 heads of size 1 and a hidden layer size of 4 \cite{yun2019transformers}.

While universal approximation theorems establish that neural networks can approximate arbitrary continuous functions to any desired accuracy, they provide no guarantees regarding constraint satisfaction. Traditional constraint enforcement approaches either rely on penalty-based soft constraints, which
provide no assurance of satisfaction, or post-processing methods that enforce constraints after the neural network is trained, which may lead to sub-optimality. Therefore, developing principled approaches that simultaneously guarantee approximation accuracy and provable constraint satisfaction represents a fundamental open challenge for their deployment in practical applications.

\section{Methods}

In this section, we propose CAffNet, a framework, shown in Fig.~\ref{fig:CaffNet_framework},  that combines a constraint-affine (CAffine) layer with neural networks (NNs) to strictly enforce input-dependent affine constraints\footnoteref{note1} on the output $y=\mathcal{P}^*(x) \in \mathbb{R}^{n_{out}}$, i.e.,
\begin{align}\label{eq:constraints}
    A(x) y\le b(x),
\end{align}
where $x \in \mathbb{R}^{n_{in}}$ denotes the input, $n_{in}$, $n_{out}$ are input and output dimensions, respectively, $A(x) \in \mathbb{R}^{m \times {n_{out}}}$, $b(x) \in \mathbb{R}^m$, and $m$ is the number of constraints, with no restrictions on $m$ nor the rank of $A(x)$.
The framework processes the input $x$ through two NNs to generate an unconstrained $f_\theta(x)$ and a null-space component $w_\phi(x)$. The NNs can be parameterized by any trainable universal function approximator establishing a mapping from input to output, including standard architectures like feedforward NNs or transformers, e.g., \cite{park2021minimum} or \cite{yun2019transformers}. To facilitate any number of input-dependent affine constraints, the applicable constraints are decomposed into sub-constraints, which will be introduced in Section~\ref{sec:cons_decop}. Then, in Section~\ref{sec:projection}, we present the closed-form projection architecture that integrates the unconstrained output, the null-space component, and the sub-constraints to guarantee hard constraint satisfaction, feasibility, as well as the universal approximation property.

\subsection{Constraint Decomposition} \label{sec:cons_decop}
To enforce an arbitrary set of linear/affine inequalities with provable feasibility guarantees, we decompose the constraint set into smaller subsets that can be handled by closed-form projection modules.
This is inspired by the fact that the boundary of a polyhedron is determined by a smaller number of active constraints.
It is reasonable to assume that the set $\mathcal{S}(x) := \{y \in \mathbb{R}^{n_{out}} \mid A(x) y \le b(x)\}$ defined by the constraints in \eqref{eq:constraints} is feasible. Since the feasible affine constraints form a polyhedron, we notice that it is not necessary to consider all sub-constraints, given the definition of minimal faces of a polyhedron below~\citep[Theorem 8.4]{schrijver1998linear}:
\begin{definition}(Minimal Face of Polyhedron)\label{theorem:minimal_face}
A set $F$ is a minimal face of a polyhedron $P=\{y: Ay \le b\}$, if and only if $F \subseteq P$, $F \neq \emptyset$, and
$
    F = \{y \mid A' y = b'\}
$
for some sub-constraint $A'y \le b'$ of $Ay \le b$.
\end{definition}
From the definition above, the minimal face is a polyhedron that does not contain any other faces of this polyhedron. In addition, all points on the minimal faces satisfy the original affine constraints. For any non-empty polyhedron, it needs at most $n_{out}$ linearly independent constraints to define a minimal face, which includes the vertices. Therefore, if the number of constraints $m$ exceeds the output dimension $n_{out}$, we only need to consider at most $n_{out}$ constraints. Otherwise, when $m \leq n_{out}$, the minimal faces are hyperplanes that can be at most defined by all $m$ constraints.
Based on this observation, let integer $k$ represent the number of constraints chosen from $A(x)$ and $b(x)$ to form the sub-constraints, and $1 \le k \le \min(m, n_{out})$. Further, let integer $i$ denote the $i$-th constraint among the selected $k$ constraints, and integer $j_i$ denote the corresponding constraint index in the original $m$ constraints of $A(x)$, $b(x)$.
Then, we have $1 \leq i \leq k$ and $1 \leq j_i \leq m$, we let $\gamma = (j_1, \dots, j_i, \dots, j_k)$ denote the index sequence. Considering all possible combinations of $k$ constraints, the set of corresponding index sequences is given by
\begin{align*}
\begin{array}{rl}
    \Gamma_k := \big\{ \gamma &= (j_1, \dots, j_i, \dots, j_k) \mid \\
    &1 \le j_1  < \dots < j_i < \dots < j_k \le m \big\},
    \end{array}
\end{align*}
and the set of all such combinations with $k \in [1, \min(m, n_{out})]$ is given by $\Gamma = \bigcup_{k=1}^{\min(m,n_{out})} \Gamma_k$, resulting in a \emph{finite} total number of combinations:
\begin{align}
    \sum_{k=1}^{\min(m,n_{out})} \binom{m}{k} \leq 2^{m} - 1.
\end{align}
For each index sequence $\gamma = (j_1, \dots, j_k) \in \Gamma$, we define the corresponding submatrix of $A(x)$ and subvector of $b(x)$ with rows $a_{j_i}^\top$ and $b_{j_i}$ selected by $\gamma$ as
\begin{align}
\begin{array}{c}
A_{\gamma} =
\begin{bmatrix}
a_{j_1} &
\hdots & a_{j_i} & \hdots &
a_{j_k}
\end{bmatrix}^\top
\in \mathbb{R}^{k \times n_{out}},
\\
b_{\gamma} =
\begin{bmatrix}
b_{j_1} &
\hdots & b_{j_i} & \hdots &
b_{j_k}
\end{bmatrix}^\top
\in \mathbb{R}^k.
\end{array}
\end{align}
Then, the original constraints can be decomposed into multiple sub-constraints defined by $A_{\gamma}$ and $b_{\gamma}$ for all $\gamma \in \Gamma$, and each sub-constraint has a number of constraints not exceeding the dimension of $y$, i.e., $k \leq \min(m,n_{out}) \leq n_{out}$.

\subsection{Projection with Sub-Constraints} \label{sec:projection}
We will now present the projection architecture that guarantees hard satisfaction of the input-dependent affine constraints based on the decomposed sub-constraints. We first introduce the following assumption on the continuity and feasibility of the constraints.
\begin{assumption}\label{assumption:cont_feas_const}
For any input $x \in \mathbb{R}^{n_{in}}$, $A(x)$ and $b(x)$ are continuous in $x$, and the feasible region $\mathcal{S}(x) := \{y \in \mathbb{R}^{n_{out}} \mid A(x) y \le b(x)\}$ is non-empty.
\end{assumption}
Using the decomposition method in Section~\ref{sec:cons_decop} to obtain sub-constraints ($A_\gamma(x),b_\gamma(x))$, we derive the projection based on the solution set of the equations $A_\gamma(x) y = b_\gamma(x)$. If $A_\gamma(x)$ has full column rank, this equation has a unique solution;  otherwise, there is a null space that makes the solution non-unique. This intuition leads to the following projection onto the sub-constraint $(A_\gamma(x),b_\gamma(x))$:
\begin{align}
\begin{array}{rl}
\mathcal{P}_{\gamma} (x)
= \!\!\!
& f_{\theta}(x) - A_{\gamma}^\dagger(x) (A_{\gamma}(x) f_{\theta}(x) - b_{\gamma}(x)) \\
&+ (I - A_{\gamma}^\dagger (x) A_{\gamma} (x)) w_{\phi}(x),
\end{array}\label{eq:projSub}
\end{align}
where $A_{\gamma}^\dagger(x)$ is the pseudoinverse of $A_{\gamma}(x)$,
$f_\theta(x) \in \mathbb{R}^{n_{out}}$ is the output of an unconstrained NN, and $w_\phi(x) \in \mathbb{R}^{n_{out}}$ is a learned vector (also parameterized by an NN)
representing the null-space component.
The projected output has the following property, whose proof is provided in Appendix~\ref{appendix:proof_feasible_sol_proj}:
\begin{lemma}[Existence of a Feasible Solution]\label{lemma:feasible_sol_proj}
For any input $x \in \mathbb{R}^{n_{in}}$, if the feasible region $\mathcal{S}(x) := \{y \in \mathbb{R}^{n_{out}} \mid A(x)y \le b(x)\}$ is non-empty, then there exists at least one candidate $y \in \{\mathcal{P}_{\gamma}(x) \mid \gamma \in \Gamma\}$ such that $A(x)y \le b(x)$.
\end{lemma}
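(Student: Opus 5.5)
The plan is to reduce the lemma to the characterization of minimal faces in Definition~\ref{theorem:minimal_face}. The key observation is that whenever the system $A_\gamma(x) y = b_\gamma(x)$ is consistent, the candidate $\mathcal{P}_\gamma(x)$ solves it exactly. So it suffices to find one $\gamma\in\Gamma$ whose equality system has a solution set that is nonempty and contained in $\mathcal{S}(x)$.

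\textbf{Step 1: a minimal face exists.} Fix $x$; by hypothesis $P=\mathcal{S}(x)$ is a nonempty polyhedron. Every nonempty polyhedron has finitely many nonempty faces, so a nonempty face of minimal dimension exists, and it is a minimal face $F$. By Definition~\ref{theorem:minimal_face}, $\emptyset\neq F=\{y\mid A'y=b'\}\subseteq P$ for some subsystem $A'y\le b'$ of $A(x)y\le b(x)$.

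\textbf{Step 2: reduce to an index sequence in $\Gamma$.} The rows of $A'$ may be too many or linearly dependent, so I pick a maximal linearly independent subset of them, indexed in increasing order by $\gamma$. Its size is $\operatorname{rank}A'\le\min(m,n_{out})$.
\begin{itemize}
\item Since $A'y=b'$ is consistent ($F\neq\emptyset$), every dropped equation is a linear combination of the kept ones, with the same combination holding on the right-hand side. Hence $\{y\mid A_\gamma y=b_\gamma\}=F$.
\item Degenerate case $\operatorname{rank}A'=0$: consistency forces $b'=0$, so $F=\mathbb{R}^{n_{out}}\subseteq P$, i.e.\ every point is feasible. Then any $\gamma$ (e.g.\ $\gamma=(1)$, assuming $m\ge1$) works trivially.
\end{itemize}
Otherwise $\gamma\in\Gamma_k$ with $k=\operatorname{rank}A'\ge1$.

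\textbf{Step 3: the candidate lies on $F$.} Using the pseudoinverse identity $A_\gamma A_\gamma^\dagger A_\gamma=A_\gamma$, compute
\[
A_\gamma\mathcal{P}_\gamma(x)=A_\gamma f_\theta-A_\gamma A_\gamma^\dagger A_\gamma f_\theta+A_\gamma A_\gamma^\dagger b_\gamma+(A_\gamma-A_\gamma A_\gamma^\dagger A_\gamma)w_\phi=A_\gamma A_\gamma^\dagger b_\gamma .
\]
Because $A_\gamma$ has full row rank here, $A_\gamma A_\gamma^\dagger=I$; alternatively, $b_\gamma\in\operatorname{range}A_\gamma$ and $A_\gamma A_\gamma^\dagger$ is the orthogonal projector onto that range. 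Either way $A_\gamma\mathcal{P}_\gamma(x)=b_\gamma$. This holds for any $f_\theta$ and $w_\phi$, so $\mathcal{P}_\gamma(x)\in F\subseteq P$, i.e.\ $A(x)\mathcal{P}_\gamma(x)\le b(x)$.

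The main obstacle I anticipate is Step 2: making sure the index set produced by the minimal-face characterization actually belongs to $\Gamma$ (size at most $\min(m,n_{out})$, and at least one) without changing the solution set. Pruning to linearly independent rows, justified by consistency, handles this. The rest is a direct appeal to \citep[Theorem 8.4]{schrijver1998linear} and pseudoinverse algebra.
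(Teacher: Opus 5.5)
Your proof is correct and follows the paper's own argument: obtain a minimal face $F=\{y \mid A_{\gamma^*}y=b_{\gamma^*}\}\subseteq\mathcal{S}(x)$ from the Schrijver characterization, check that $\gamma^*\in\Gamma$, and use $A_{\gamma^*}A_{\gamma^*}^\dagger A_{\gamma^*}=A_{\gamma^*}$ together with consistency to get $A_{\gamma^*}\mathcal{P}_{\gamma^*}(x)=b_{\gamma^*}$. Your Step 2 strengthens the paper's argument in two places: you prune to a maximal linearly independent set of rows, with consistency guaranteeing the solution set is unchanged, and you treat the rank-zero case separately. Together these justify what the paper only asserts, namely that the minimal face is cut out by at most $\min(m,n_{out})$ linearly independent constraints.
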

Hence, the feasible candidate set can be defined by selecting all projections that are inside the safe region, i.e.,
\begin{align}\label{eq:feasible_proj_set}
\mathcal{S}_{\mathcal{P}}(x)
:=
\left\{
\mathcal{P}_{\gamma}(x)
\,\middle|\,
\gamma \in \Gamma,\;
A(x)\mathcal{P}_{\gamma}(x) \le b(x)
\right\}.
\end{align}

Finally, the output of CAffNet is defined as either the original output of the unconstrained NN, $f_\theta$, if it satisfies the constraints, or the projection onto~\eqref{eq:feasible_proj_set} that has minimum distance to $f_\theta$ (with any $p$-norm), i.e.,
\begin{align}\label{eq:y_star}
\!\!\begin{array}{l}
\mathcal{P}^*(x)
= \\
\begin{cases}
    f_\theta(x), & \text{if }
    A(x)f_\theta(x) \le b(x), \\
    \displaystyle\arg\!\!\!\! \min_{y \in  \mathcal{S}_{\mathcal{P}}(x)} \|y - f_\theta(x)\|_p, & \text{otherwise}.
\end{cases}
\end{array}\!\!
\end{align}

Based on the construction above, CAffNet is guaranteed to satisfy the input-dependent affine constraints, as stated by the following theorem that is proven in Appendix~\ref{appendix:proof_const_sat_of_CAffNet}:
\begin{theorem}[Hard Constraint Satisfaction] \label{theorem:const_sat_of_CAffNet}
For any input $x \in \mathbb{R}^{n_{in}}$, under Assumption~\ref{assumption:cont_feas_const}, the output of CAffNet $\mathcal{P}^*(x)$ in \eqref{eq:y_star} satisfies the input-dependent affine constraints, i.e., $A(x) \mathcal{P}^*(x) \le b(x)$.
\end{theorem}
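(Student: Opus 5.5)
The proof is a direct case analysis on the definition of $\mathcal{P}^*(x)$ in \eqref{eq:y_star}, with Lemma~\ref{lemma:feasible_sol_proj} doing the substantive work. Fix an arbitrary input $x \in \mathbb{R}^{n_{in}}$. By Assumption~\ref{assumption:cont_feas_const}, $\mathcal{S}(x)$ is non-empty, so the hypothesis of Lemma~\ref{lemma:feasible_sol_proj} holds at this $x$.

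\emph{First case:} $A(x) f_\theta(x) \le b(x)$. Then \eqref{eq:y_star} sets $\mathcal{P}^*(x) = f_\theta(x)$, and the constraint $A(x)\mathcal{P}^*(x) \le b(x)$ holds trivially.

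\emph{Second case:} $A(x) f_\theta(x) \not\le b(x)$. Here $\mathcal{P}^*(x)$ is any minimizer of $\|y - f_\theta(x)\|_p$ over $\mathcal{S}_{\mathcal{P}}(x)$ from \eqref{eq:feasible_proj_set}. I would show that this minimization is well posed and that every minimizer lies in $\mathcal{S}(x)$, in three steps.
\begin{enumerate}
\item \emph{Non-emptiness.} Lemma~\ref{lemma:feasible_sol_proj} gives some $\gamma \in \Gamma$ with $A(x)\mathcal{P}_\gamma(x) \le b(x)$, so $\mathcal{S}_{\mathcal{P}}(x) \neq \emptyset$.
\item \emph{Existence of a minimizer.} The set $\mathcal{S}_{\mathcal{P}}(x)$ has at most $|\Gamma| \le 2^m - 1$ elements. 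Each $\mathcal{P}_\gamma(x)$ in \eqref{eq:projSub} is a single well-defined vector, since the pseudoinverse always exists. A real-valued function on a finite non-empty set attains its minimum, so the $\arg\min$ is non-empty. If there are ties, any tie-breaking rule may be used, because the claim will hold for every minimizer.
\item \emph{Feasibility.} Every element of $\mathcal{S}_{\mathcal{P}}(x)$ satisfies $A(x)y \le b(x)$ by the very definition \eqref{eq:feasible_proj_set}. Hence any selected minimizer $\mathcal{P}^*(x)$ satisfies $A(x)\mathcal{P}^*(x) \le b(x)$.
\end{enumerate}

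Since $x$ was arbitrary, the constraint holds for all inputs, which proves the theorem. The only potential obstacle is well-posedness of the $\arg\min$ in the second case: the set might be empty, or the minimum might fail to be attained. Both are resolved above, the first by Lemma~\ref{lemma:feasible_sol_proj} and the second by the finiteness of $\Gamma$. The continuity part of Assumption~\ref{assumption:cont_feas_const} is not needed for this argument; it matters only for the later approximation results.
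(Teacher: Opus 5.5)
Your proposal is correct and follows the paper's proof. Non-emptiness of $\mathcal{S}(x)$ feeds Lemma~\ref{lemma:feasible_sol_proj}, which makes $\mathcal{S}_{\mathcal{P}}(x)$ non-empty, and then both branches of \eqref{eq:y_star} return a feasible point by construction; your explicit remark that the $\arg\min$ is attained because $\Gamma$ is finite is a small piece of rigor the paper leaves implicit.
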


Note that CAffNet does not require $A(x)$ to be full-rank or irredundant. In particular, it does not restrict the number of constraints $m$ relative to the output dimension $n_{out}$, and it remains valid when constraints are redundant or linearly dependent.
When a chosen submatrix $A_\gamma(x)$ has full column rank, i.e., $rank(A_\gamma)=n_{out}$ and hence $k=m$, we have
$I - A_\gamma^\dagger A_\gamma = 0$.
So, the null-space component disappears, and the projection is uniquely determined by the intersection of the selected constraints, i.e., an extreme point of the polyhedron.
When $A_\gamma(x)$ is not full column rank, the constraint equations admit infinitely many solutions forming an affine subspace (a face/edge/hyperplane). Then, the null-space component $w_{\phi}(x)$ can be trained to select an optimal point within the null space,  allowing different feasible points on the same face to be selected based on the task objective. If $w_{\phi}(x)$ is zero, it performs an orthogonal projection. The effect of this null-space term is illustrated in Fig.~\ref{fig:projection_demo}, where the optimal feasible point can lie on a vertex or an edge for different $w_\phi(x)$'s.
Further, HardNet is recovered as a special case. If $m = n_{out}$, $A(x)$ has full row rank, and $f_\theta(x)$ violates all constraints, there exists a choice $\gamma$ for which $A_\gamma(x)=A(x)$ and $I-A_\gamma^\dagger A_\gamma=0$. In this case, CAffNet's null space vanishes, and the resulting projection coincides with HardNet. Thus, CAffNet generalizes HardNet.

\begin{figure}[t]
    \centering
    \begin{subfigure}[t]{0.49\columnwidth}
        \centering
        \includegraphics[height=3cm]{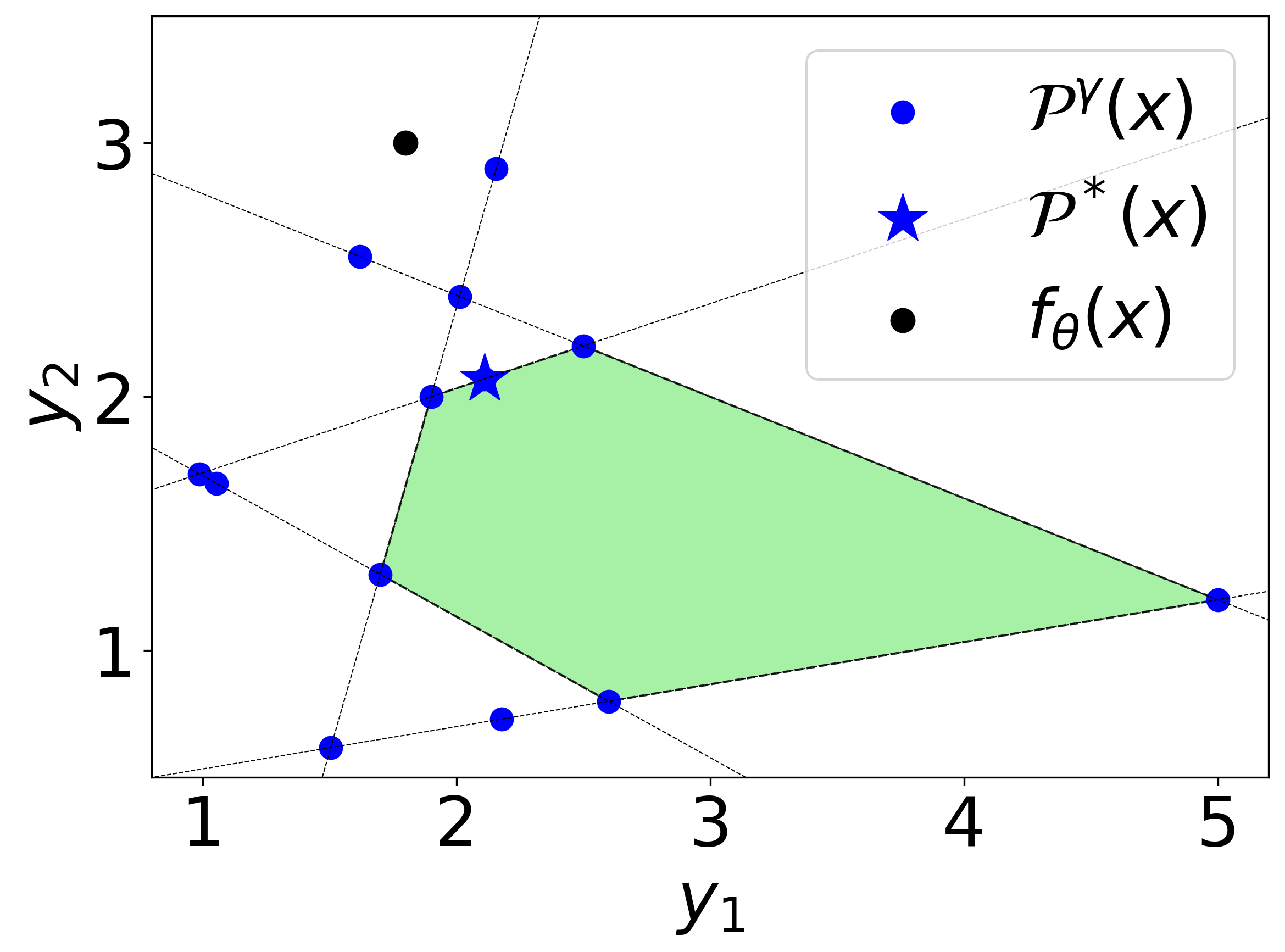}%
        \caption[]{\small Projection onto an edge.}
        \label{fig:projection_on_edge}
    \end{subfigure}
\hfill
    \begin{subfigure}[t]{0.49\columnwidth}
        \centering
        \includegraphics[height=3cm]{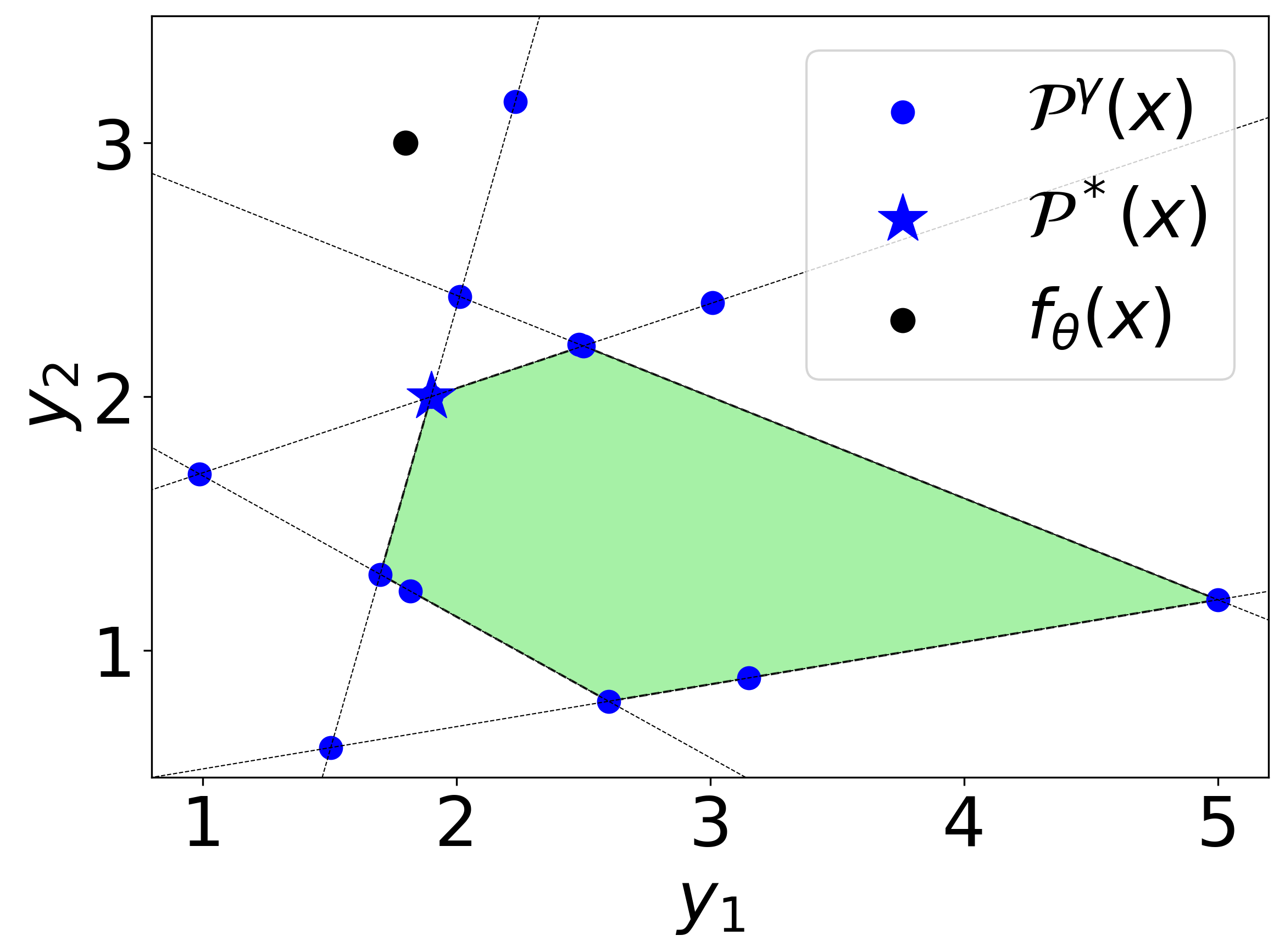}%
        \caption[]{\small Projection onto a vertex.}
        \label{fig:projection_on_vertex}
    \end{subfigure}
\caption[]{Optimal projection results of CAffNet for different cases. The optimal final point may be different for various $w_\phi(x)$.}
\label{fig:projection_demo}
\end{figure}

In addition to the feasibility and constraint satisfaction guarantees, CAffNet is also a universal approximator:
\begin{theorem}[Universal Approximation of CAffNet]\label{theorem:universal_approximation_CAffNet}
Suppose $\mathcal{X} \subset \mathbb{R}^{n_{in}}$ is compact, $A(x)$ and $b(x)$ are continuous over $\mathcal{X}$, and $\mathcal{S}(x) := \{y \in \mathbb{R}^{n_{out}} \mid A(x)y \le b(x)\}$ is a non-empty set.
Then, for any function classes $\mathcal{F}_{\theta}, \mathcal{F}  \subset \mathcal{C}(\mathcal{X}, \mathbb{R}^{n_{out}})$ (or $\mathcal{F}_{\theta}, \mathcal{F} \subset L^p(\mathcal{X}, \mathbb{R}^{n_{out}})$ for any $p \in [1, \infty)$), if the unconstrained $\mathcal{F}_\theta$ universally approximates $\mathcal{F}$, then $\mathcal{F}_{\text{CAffNet}} := \{\mathcal{P}^*(x)\}$ universally approximates $\mathcal{F}_{\text{target}} := \{f_t \in \mathcal{F}\,\mid\, A(x) f_t \leq b(x) \}$.
\end{theorem}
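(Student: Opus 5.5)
The approach is to show that the CAffNet output $\mathcal{P}^*$ is never much farther from a feasible target $f_t$ than the unconstrained output $f_\theta$ is. Concretely, we aim for a pointwise bound
\begin{align*}
\|\mathcal{P}^*(x)-f_t(x)\| \le C\,\|f_\theta(x)-f_t(x)\|
\end{align*}
with a constant $C$ uniform over $x\in\mathcal{X}$. Given such a bound, universal approximation transfers directly. For $f_t\in\mathcal{F}_{\text{target}}$ and $\epsilon>0$, pick $f_\theta\in\mathcal{F}_\theta$ with $\|f_\theta-f_t\|\le \epsilon/C$ in the sup norm (or the $L^p$ norm). The bound then gives $\|\mathcal{P}^*-f_t\|\le\epsilon$ in the same norm, since a pointwise bound integrates. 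Theorem~\ref{theorem:const_sat_of_CAffNet} already guarantees feasibility, so only the approximation error needs control. Norm equivalence on $\mathbb{R}^{n_{out}}$ lets us work with the $p$-norm used in \eqref{eq:y_star} and absorb constants.

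\textbf{Reduction to a projection bound.} There are two cases. If $A(x)f_\theta(x)\le b(x)$, then $\mathcal{P}^*(x)=f_\theta(x)$ and the bound holds with $C=1$. Otherwise $\mathcal{P}^*(x)$ is the nearest element of $\mathcal{S}_{\mathcal{P}}(x)$ to $f_\theta(x)$, so for any feasible candidate $\mathcal{P}_\gamma(x)$,
\begin{align*}
\|\mathcal{P}^*-f_t\|\le \|\mathcal{P}^*-f_\theta\|+\|f_\theta-f_t\|\le \|\mathcal{P}_\gamma-f_\theta\|+\|f_\theta-f_t\|.
\end{align*}
It therefore suffices to exhibit one feasible $\gamma$ with $\|\mathcal{P}_\gamma(x)-f_\theta(x)\|\le C'\|f_\theta(x)-f_t(x)\|$. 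We are free to choose $w_\phi$ in this construction, because we may take $w_\phi$ from the NN class; the simplest choice is $w_\phi=f_\theta$. With that choice, \eqref{eq:projSub} reduces to
\begin{align*}
\mathcal{P}_\gamma = f_\theta - A_\gamma^\dagger(A_\gamma f_\theta-b_\gamma),
\end{align*}
which is the orthogonal projection of $f_\theta$ onto the affine set $\{A_\gamma y=b_\gamma\}$. Its displacement from $f_\theta$ is $\|A_\gamma^\dagger (A_\gamma f_\theta-b_\gamma)\|$.

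\textbf{Main obstacle.} The hard step is finding a feasible $\gamma$ whose residual is controlled by $\|f_\theta-f_t\|$, with a constant uniform in $x$.

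For the residual, use the nearest feasible point. Let $y^\star$ be the Euclidean projection of $f_\theta$ onto the polyhedron $\mathcal{S}(x)$. Then $\|f_\theta-y^\star\|\le\|f_\theta-f_t\|$ because $f_t(x)\in\mathcal{S}(x)$. Take $\gamma$ to index a maximal linearly independent subset of the constraints active at $y^\star$. The KKT conditions say that $f_\theta-y^\star$ lies in the cone of the active normals, and the affine set $\{A_\gamma y=b_\gamma\}$ contains $y^\star$. Hence the orthogonal projection onto it is $y^\star$ itself, i.e.\ $\mathcal{P}_\gamma=y^\star$, which is feasible and gives $C'=1$.

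The subtle issues are the following:
\begin{itemize}
\item \emph{Linear dependence among active constraints.} The active constraints may be linearly dependent, so one must check that $f_\theta-y^\star$ lies in the row space of $A_\gamma$. This holds because the span of all active normals equals the span of the chosen independent subset.
\item \emph{Degenerate case where nothing is active.} This happens only when $f_\theta$ is already feasible, which is covered by the first case.
\end{itemize}

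So we get $C=2$, and the argument appears not to need uniform bounds on $\|A_\gamma^\dagger\|$. Measurability and integrability of $\mathcal{P}^*$ in the $L^p$ case follow from the finite case structure and boundedness on compact $\mathcal{X}$, since $f_t$ is bounded. Continuity of $A,b$ is used only there.
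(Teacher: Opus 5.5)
\textbf{The choice $w_\phi=f_\theta$ is wrong.} In \eqref{eq:projSub}, the first two terms $f_\theta-A_\gamma^\dagger(A_\gamma f_\theta-b_\gamma)$ are \emph{already} the orthogonal projection onto $\{A_\gamma y=b_\gamma\}$ (when that set is nonempty). The null-space term is added on top of that. With your $\gamma$ and $w_\phi=f_\theta$ you actually get
$\mathcal{P}_\gamma=y^\star+(I-A_\gamma^\dagger A_\gamma)f_\theta=y^\star+(I-A_\gamma^\dagger A_\gamma)y^\star$,
using $f_\theta-y^\star\in\mathrm{range}\,A_\gamma^\top$. So $y^\star$ is slid along the face by its own null-space component. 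That shift is unrelated to $\|f_\theta-f_t\|$ and can leave $\mathcal{S}(x)$.

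This breaks the approximation, not just the proof. Take $\mathcal{S}=\{y\in\mathbb{R}^2: y_1\le 0,\ y_2\le 0\}$, $f_t\equiv(-1,0)$ and $f_\theta\equiv(-1,\epsilon)$. The candidates for $\gamma=\{1\},\{2\},\{1,2\}$ are $(0,2\epsilon)$ (infeasible), $(-2,0)$ and $(0,0)$. Hence $\|\mathcal{P}^*-f_t\|_2=1$ for every $\epsilon>0$.

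The correct choice is $w_\phi\equiv 0$, as the paper itself remarks. Your argument needs it to be \emph{exactly} zero, not merely small. The point $y^\star+(I-A_\gamma^\dagger A_\gamma)w_\phi$ can violate a constraint that is inactive at $y^\star$ but has slack below $\|w_\phi\|$, and such slacks need not be bounded away from zero on $\mathcal{X}$. So you must assume the null-space network class contains the zero function. This holds for feedforward nets and transformers by zeroing the final linear layer.

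\textbf{With that repair, your proof is correct and differs from the paper's.}

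The paper's route is as follows. It intersects the segment $[f_\theta,f_t]$ with $\partial\mathcal{S}(x)$ at a point $f_\gamma$ and substitutes $b_\gamma=A_\gamma f_\gamma$. It then bounds $\|\mathcal{P}_\gamma-f_t\|_p$ using $\|A_\gamma^\dagger A_\gamma\|_p,\ \|I-A_\gamma^\dagger A_\gamma\|_p\le\sqrt{n_{out}}$ and a learned $w_\phi$ approximating $w_t=0$ with $\|w_\phi\|_p<2K$. This gives the constant $3+3\sqrt{n_{out}}$. Finally it invokes minimality of $\mathcal{P}^*$ over $\mathcal{S}_{\mathcal{P}}(x)$. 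That step presupposes that this particular $\mathcal{P}_\gamma$ is feasible and that $\gamma\in\Gamma$, and neither is checked there. Projecting onto the constraints active at $f_\gamma$ can exit $\mathcal{S}(x)$ through a constraint that is inactive there.

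Your route uses the Euclidean nearest point and KKT, taking a maximal independent subset of the active rows. This makes the candidate \emph{equal} to $y^\star$. Feasibility, $1\le|\gamma|\le\min(m,n_{out})$, and the distance bound then come for free, and no operator-norm estimates on pseudoinverses are needed.

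The costs are the following:
\begin{itemize}
\item You need the exact-zero requirement on $w_\phi$.
\item $C=2$ holds only when the selection norm in \eqref{eq:y_star} is Euclidean. For general $p$ you pick up a factor $n_{out}^{|1/p-1/2|}$ when comparing $\|y^\star-f_\theta\|_p$ with $\|f_t-f_\theta\|_p$.
\end{itemize}

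One small correction: in the $L^p$ case $f_t$ need not be bounded. You do not need it, since your pointwise bound already puts $\mathcal{P}^*-f_t$ in $L^p$.
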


The proof is provided in Appendix~\ref{appendix:proof_universal_approx}. As a corollary, since feedforward NNs and transformers can universally approximate arbitrary continuous functions \cite{park2021minimum,yun2019transformers}, CAffNet with the same feedforward NNs and transformers can also universally approximate the constrained target function.

\begin{remark}[Constraint-specific Null-space Component]

    Ideally, the learned null-space component could be specific to each sub-constraint, i.e., $w_{\phi, \gamma}(x)$, allowing more flexible and expressive selection for learning exact projections onto the appropriate hyperplanes. However, this would require a separate network for each sub-constraint, which can be computationally expensive. Therefore, we use a single shared null-space component $w_\phi(x)$ across all sub-constraints. This choice is more computationally efficient, and as shown Appendix~\ref{appendix:proof_universal_approx}, is sufficient to preserve universal approximation property.

\end{remark}

\begin{remark}[Scalability]\label{remark:scalability}
We now discuss the scalability of CAffNet and how to mitigate these issues in practice.
\begin{enumerate}[label=\arabic*., leftmargin=0pt, itemindent=2em, nosep]
    \item
CAFFNet-Lite: Choosing constraint combinations with cardinalities from $1$ to $\min(m,n_{out})$ sometimes may be computationally expensive, especially when the number of constraints $m$ and the output dimension $n_{out}$ are large. To reduce the number of sub-constraints, we use the observation that the feasibility in Lemma~\ref{lemma:feasible_sol_proj} and its proof in Appendix~\ref{appendix:proof_feasible_sol_proj} is certified by considering minimal faces, such as vertices for the pointed polyhedron. Therefore, in practice, one may restrict the candidate set to a subset of combinations, for example by considering each individual constraint and the combinations corresponding to minimal faces. In this case, $k\in\{1,\min(m,n_{out})\}$, and the total number of combinations is reduced to $m + \binom{m}{\min(m,n_{out})}$, which is polynomial in $m$, specifically $O(m^{\min(m,n_{out})})$, compared with the exponential complexity $O(2^m)$ of the full combination set. We refer to this reduced variant as CAffNet-Lite, which provides a practical way to apply the proposed method to larger-scale problems and reduce training time. Detailed derivation and discussion of this lite version are referred to \cite{zhao2026learningsafe}.

\item
Complexity of Pseudoinverse Computation:
Computing pseudoinverses can be expensive for large matrices. In our approach, the largest matrix requiring a pseudoinverse has size $\min(m,n_{out})\times n_{out}$, which is often manageable, and pseudoinverse computation is highly optimized in modern libraries such as PyTorch/CUDA~\cite{paszke2019pytorch} through batched operations. For problems where the training data are available a priori, such as Cases~\ref{exp:pwc} and~\ref{exp:opt}, both $A_\gamma(x)$ and $A_\gamma^\dagger(x)$ can be precomputed for each data point and stored in a lookup table, which can then be reused during training and inference to reduce computation. The projection module can also benefit from GPU acceleration through parallelization and batch processing.

\item
Hardware Limitations:
For large-scale problems, the number of constraint combinations may be limited by hardware resources such as GPU memory. In such cases, the combinations can be processed chunk by chunk to accommodate hardware limitations.

\end{enumerate}
\end{remark}

\section{Experiments}
In this section, we demonstrate the hard constraint satisfaction guarantee and effectiveness of CAffNet with three implementation scenarios, following similar experiments to \cite{min2025hardnet} for learning (1) nonlinear functions with piecewise constraints, (2) optimization solvers, and (3) control policies in safety-critical systems. Since HardNet \cite{min2025hardnet} has been shown to often outperform all other approaches, for simplicity, we only compare our proposed CAffNet with HardNet as well as standard neural networks (NNs) with soft constraints, using a penalty cost of $100\text{ReLU}(A(x)y-b(x))$, where $y$ is the output of each method. In all cases, the number of constraints is set to exceed the output dimension to demonstrate the advantage of CAffNet in handling such cases, and hence, soft constraints are also applied to HardNet, as it can have constraint violations. In addition, to demonstrate the flexibility of the CAffNet framework, we apply the proposed framework using both a feedforward neural network (CAffNet-FF) and a transformer-based model (CAffNet-TF). Each CAffNet-FF model uses  ReLU activation functions and consists of three hidden layers with $200$ neurons per layer, as in \cite{min2025hardnet}, while for CAffNet-TF, the transformer architecture is composed of three attention heads of size 40 and a hidden layer size of 120, chosen to have a comparable number of resulting parameters/decision variables as that of CAffNet-FF\@. For all experiments, we use the Adam optimizer with a learning rate of 0.0001 and the 2-norm in \eqref{eq:y_star}. Each experiment is repeated five times with different random seeds. Other detailed parameters for each experiment are described in the corresponding subsections. All models are implemented and trained using PyTorch~\cite{paszke2019pytorch}. The training and testing are performed
with NVIDIA Tesla V100-SXM2-32GB.

The evaluation metrics include total loss, constraint violations, and computation times for training and testing/inference. The total loss is computed in different forms for each experiment and averaged over all training samples. The constraint violation is evaluated by $r = \text{ReLU}(A(x) y - b(x))$, where $r\in \mathbb{R}^m$. We report the maximum value of $r$, the mean value of $r$, and the percentage of violated constraints over all testing samples. The training time $T_{train}$ and testing time $T_{test}$ are measured in milliseconds (ms), where $T_{train}$ is the average training time for each training epoch, and $T_{test}$ is the inference time for the test set. Finally, the values without parentheses in the tables are the mean values over five runs, and the values within parentheses are the corresponding standard deviations.

\subsection{Learning Nonlinear Functions with Piecewise Constraints}\label{exp:pwc}
We first consider the problem of learning a nonlinear function with piecewise constraints. The target function is a piecewise nonlinear function $f:[-2, 2] \to \mathbb{R}$, with $n_{in} = n_{out} = 1$. There are four piecewise constraints; two are treated as upper bounds, and the other two are treated as lower bounds. The detailed description of the target function and the constraints are provided in Appendix~\ref{sec:pwc_detail}. We randomly generate 50 training samples on the target function with uniform distribution, and 400 testing samples linearly spaced in $[-2, 2]$. The training loss is the mean squared error (MSE) between the network output and the target function value. The training is performed for 50000 epochs with a batch size of 500. The learned functions are shown in Fig.~\ref{fig:pwc_all}, and the evaluation results are summarized in Table~\ref{pwc_table}. It can be observed that both CAffNet variants strictly satisfy all constraints, while the others have constraint violations. Due to the more complex framework, both CAffNet variants take slightly longer to train and test compared to the others. However, CAffNet improves the training efficiency, with a significantly lower initial loss and a faster convergence rate as shown in Fig.~\ref{fig:loss}, and it can also be observed that CAffNet-TF achieves the lowest MSE among all methods.

\begin{figure}[t]
    \centering
    \includegraphics[width=0.9\linewidth,trim=0mm 5mm 0mm 0mm]{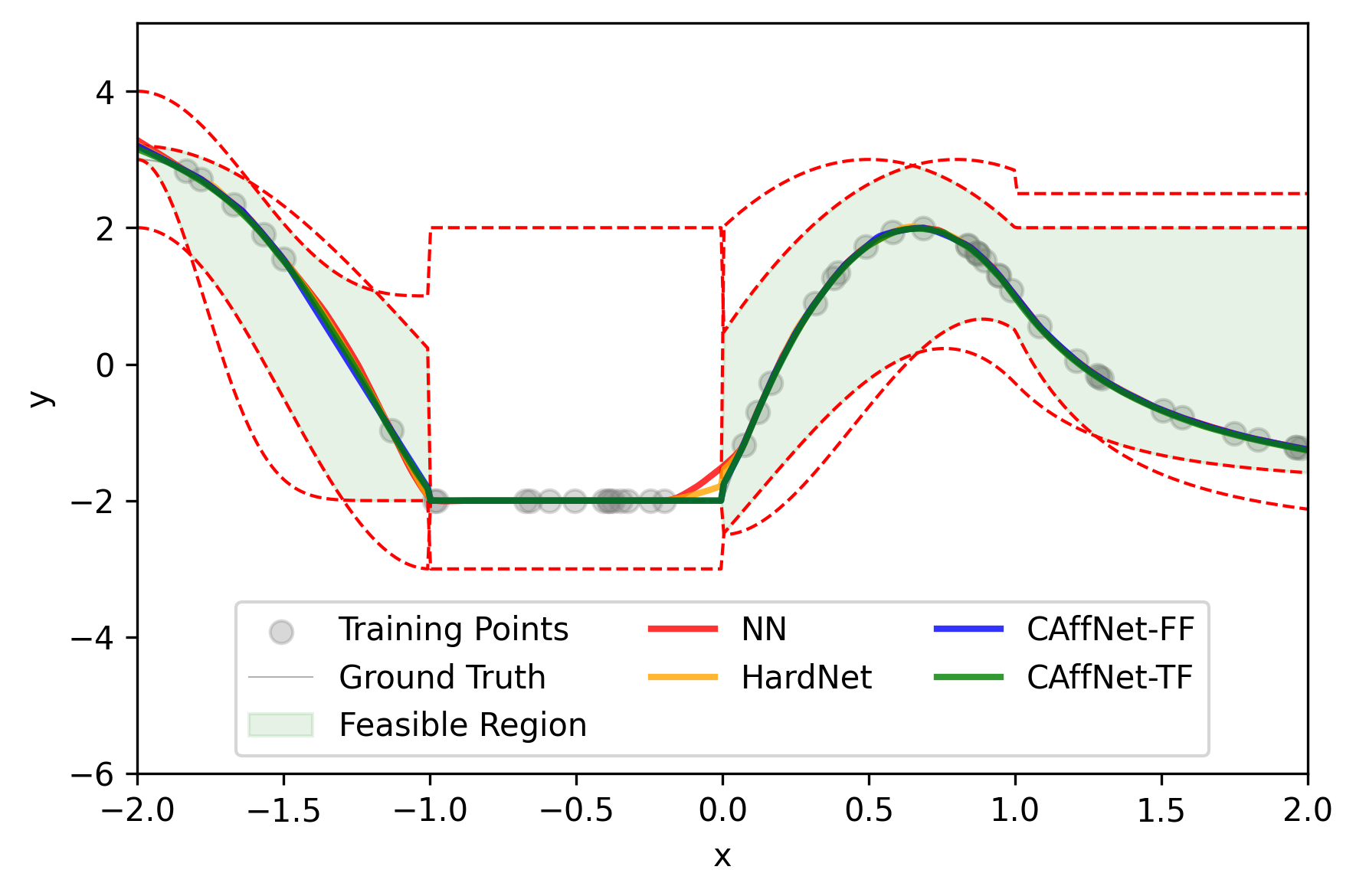}%
    \caption[]{\small Learned functions with piecewise constraints. Both NN with soft constraints and HardNet have constraint violations around $x=-1$ and $x=0$, while both CAffNets satisfy all constraints.}
    \label{fig:pwc_all}
\end{figure}

\begin{figure}[t]
    \centering
    \includegraphics[width=\linewidth]{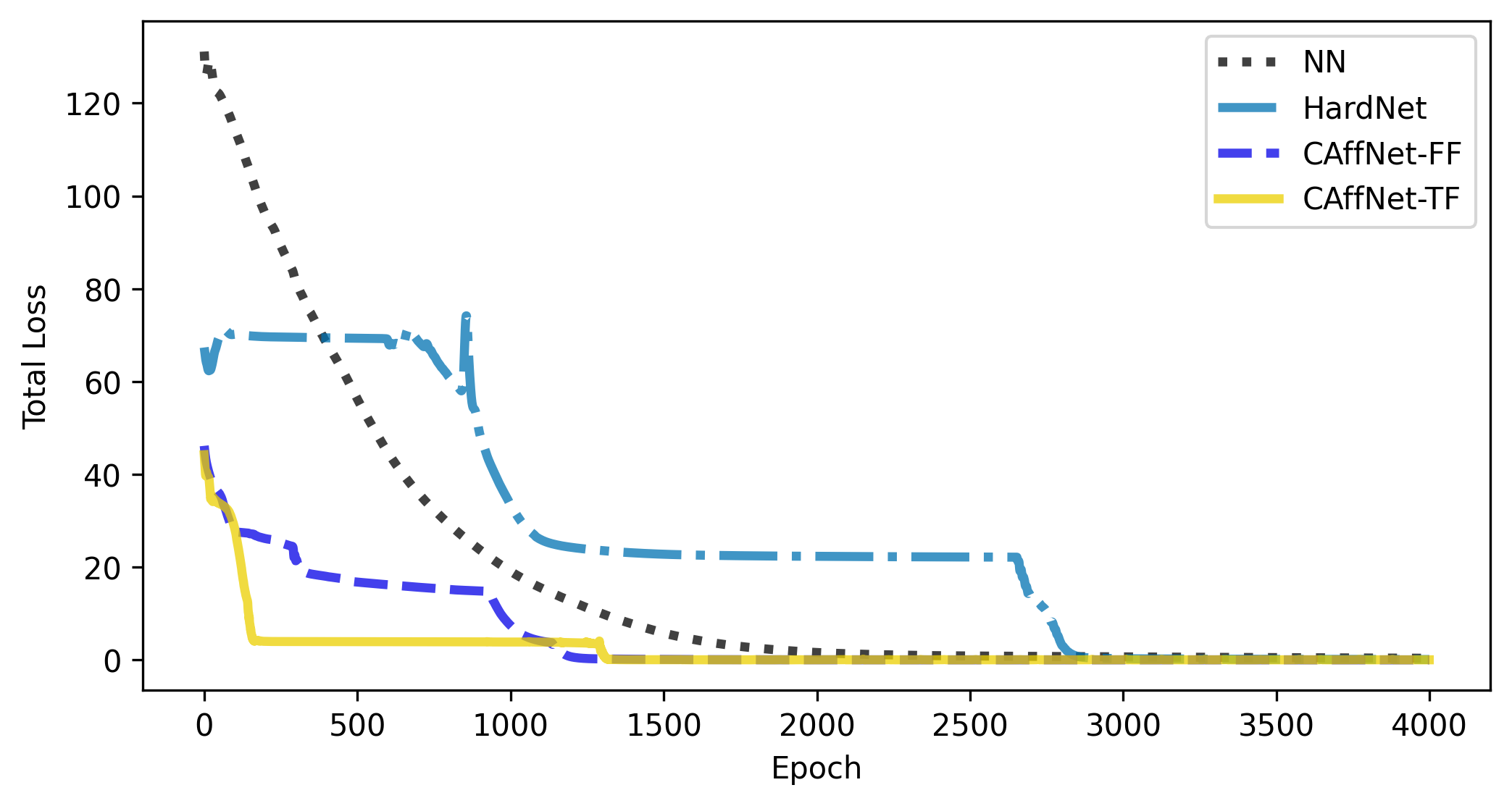}%
    \caption[]{\small Training loss of the piecewise constraints experiment for the first 4000 epochs. Both CAffNet variants have a lower initial loss and a faster convergence rate than the rest.}
    \label{fig:loss}
\end{figure}

\begin{table*}[pt]
  \caption{Results of learning nonlinear functions with piecewise constraints. Both CAffNets strictly satisfy all constraints with CAffNet-TF achieving the lowest MSE, while NN and HardNet have constraint violations. Compared to NN, CAffNet-TF reduces MSE by $73.33\%$.}
  \label{pwc_table}
  \begin{center}
    \begin{small}
      \begin{sc}

\begin{tabular}{lccccc}
\toprule
Method & MSE & \multicolumn{2}{c}{Inequality violation} & $T_{train}$ (ms) & $T_{test}$ (ms) \\
\cmidrule(lr){3-4}
 &  & Max & Mean &  &  \\
\midrule
NN & 0.0045 & 0.0074 & 0.0019 & \textbf{5.3100} & \textbf{3.3700} \\
 & (0.0057) & (0.0074) & (0.0019) & (0.16) & (0.20) \\
HardNet & 0.0037 & 0.0033 & 0.0008 & 8.0000 & 3.7000 \\
 & (0.0063) & (0.0037) & (0.0009) & (0.34) & (0.24) \\
CAffNet-FF & 0.0020 & \textbf{0.0000} & \textbf{0.0000} & 11.9300 & 7.1100 \\
 & (0.0032) & (0.0000) & (0.0000) & (0.55) & (0.49) \\
CAffNet-TF & \textbf{0.0012} & \textbf{0.0000} & \textbf{0.0000} & 15.5700 & 14.7100 \\
 & (0.0009) & (0.0000) & (0.0000) & (0.32) & (2.04) \\
\bottomrule
\end{tabular}

      \end{sc}
    \end{small}
  \end{center}
  \vskip -0.1in
\end{table*}

\subsection{Learning Optimization Solvers}\label{exp:opt}
\begin{table*}[t]
  \caption{Results of learning optimization solvers with inequality and equality constraints. Both CAffNets strictly satisfy all constraints. CAffNet-FF has comparable objective values to the benchmark optimizer IPOPT\@. In addition to constraint satisfaction, both CAffNet variants also have significantly faster testing time than IPOPT, making them suitable for real-time applications.
  Both NN and HardNet have constraint violations, especially of the equality constraints.}
  \label{opt_table}
  \begin{center}
    \begin{small}
      \begin{sc}
\resizebox{\textwidth}{!}{%
\begin{tabular}{lccccccccc}
\toprule
Method & Obj. value & \multicolumn{3}{c}{Inequality violation} & \multicolumn{3}{c}{Eq viol.} & $T_{train}$ (ms) & $T_{test}$ (ms) \\
\cmidrule(lr){3-5}\cmidrule(lr){6-8}
 &  & Max & Mean & Num (\%) & Max & Mean & Num (\%) &  &  \\
\midrule
IPOPT & \textbf{-0.3494} & \textbf{0.0000} & \textbf{0.0000} & \textbf{0.00\%} & \textbf{0.0000} & \textbf{0.0000} & \textbf{0.00\%} & - & 19799.8000 \\
 & (0.4995) & (0.0000) & (0.0000) & (0.00\%) & (0.0000) & (0.0000) & (0.00\%) & (-) & (295.16) \\
NN & -0.1194 & 0.0018 & 0.0004 & 0.31\% & 0.1136 & 0.0685 & 100.00\% & \textbf{4.2700} & \textbf{1.2300} \\
 & (0.5824) & (0.0015) & (0.0003) & (0.21\%) & (0.0035) & (0.0023) & (0.00\%) & (0.09) & (0.01) \\
HardNet & -0.3000 & 0.0027 & 0.0006 & 3.24\% & 0.0435 & 0.0258 & 99.99\% & 7.3800 & 2.1100 \\
 & (0.4975) & (0.0026) & (0.0006) & (2.91\%) & (0.0123) & (0.0071) & (0.01\%) & (0.20) & (0.03) \\
CAffNet-FF & -0.3418 & \textbf{0.0000} & \textbf{0.0000} & \textbf{0.00\%} & \textbf{0.0000} & \textbf{0.0000} & \textbf{0.00\%} & 1325.3800 & 1298.6700 \\
 & (0.5028) & (0.0000) & (0.0000) & (0.00\%) & (0.0000) & (0.0000) & (0.00\%) & (26.28) & (26.28) \\
CAffNet-FF (Lite) & -0.3418 & \textbf{0.0000} & \textbf{0.0000} & \textbf{0.00\%} & \textbf{0.0000} & \textbf{0.0000} & \textbf{0.00\%} & 737.1600 & 713.6900 \\
 & (0.5028) & (0.0000) & (0.0000) & (0.00\%) & (0.0000) & (0.0000) & (0.00\%) & (20.54) & (20.73) \\
CAffNet-TF & -0.1576 & \textbf{0.0000} & \textbf{0.0000} & \textbf{0.00\%} & \textbf{0.0000} & \textbf{0.0000} & \textbf{0.00\%} & 1327.7900 & 1299.9500 \\
 & (0.3653) & (0.0000) & (0.0000) & (0.00\%) & (0.0000) & (0.0000) & (0.00\%) & (24.21) & (24.62) \\
\bottomrule
\end{tabular}
}
      \end{sc}
    \end{small}
  \end{center}
  \vskip -0.1in
\end{table*}

\begin{table*}[t]
  \caption{Results of learning control policies for safety-critical systems, where CAffNet-FF
  satisfies all constraints. Both NN and HardNet fail to avoid obstacles and have constraint violations.}
  \label{cbf-table}
  \begin{center}
    \begin{small}
      \begin{sc}

\begin{tabular}{lcccccc}
\toprule
Method & Cost & \multicolumn{3}{c}{Inequality violation} & $T_{train}$ (ms) & $T_{test}$ (ms) \\
\cmidrule(lr){3-5}
 &  & Max & Mean & Num (\%) &  &  \\
\midrule
NN & $ \mathbf{4.1411\times 10^{5}} $ & 0.1348 & 0.0106 & 2.60\% & \textbf{2895.1300} & \textbf{47.1700} \\
 & ($ 6.7984\times 10^{4} $) & (0.0083) & (0.0006) & (0.03\%) & (263.98) & (4.17) \\
HardNet & $ 4.5701\times 10^{5} $ & 0.1317 & 0.0104 & 2.60\% & 4885.6000 & 855.5100 \\
 & ($ 4.2737\times 10^{4} $) & (0.0066) & (0.0005) & (0.03\%) & (231.74) & (81.92) \\
CAffNet-FF & $ 7.2060\times 10^{5} $ & \textbf{0.0000} & \textbf{0.0000} & \textbf{0.00\%} & 12207.9300 & 1885.1000 \\
 & ($ 6.9090\times 10^{4} $) & (0.0000) & (0.0000) & (0.00\%) & (306.59) & (45.96) \\
\bottomrule
\end{tabular}

      \end{sc}
    \end{small}
  \end{center}
  \vskip -0.1in
\end{table*}

Next, we consider the problem of learning optimization solvers with inequality and equality constraints:
\begin{align*}
    f(x) = \arg \min_{y \in \mathbb{R}^{n_{out}}} & \quad \frac{1}{2} y^\top Q y + p^\top \sin(y) \\
    \text{s.t.} & \quad G y \le h,  C y = x,
\end{align*}
where $Q\in \mathbb{R}^{n_{out} \times n_{out}}$, $p \in \mathbb{R}^{n_{out}}$, $G \in \mathbb{R}^{n_{ineq} \times n_{out}}$, $h \in \mathbb{R}^{n_{ineq}}$, $C \in \mathbb{R}^{n_{eq} \times n_{out}}$, and $x \in \mathbb{R}^{n_{eq}}$. Input $x$ is sampled in $[-1, 1]^{n_{eq}}$, and $G$ and $C$ are randomly generated. To ensure that this optimization function is feasible for any input $x$, $h$ is calculated as $h = \sum_{j} |(G C^\dagger)_{ij}|$ by utilizing $|x_j| \leq 1$ and $Gy = GC^\dagger x \leq \sum_{j} |(G C^\dagger)_{ij}|$~\cite{donti2021dc3}. This experiment is trained using unsupervised learning, where the training loss is the objective value of the optimization problem.
We set $n_{in} = n_{eq} = 3$, $n_{out} = n_{ineq} = 5$.
The detailed expressions of the constraints are provided in Appendix~\ref{sec:opt_detail}.
For this experiment, the benchmark optimizer is IPOPT, and we randomly generate 1000 training samples and 1000 testing samples. The training is performed for 10000 epochs with a batch size of 1000.

The results are summarized in Table~\ref{opt_table}. It can be observed that the benchmark optimizer and both CAffNet variants strictly satisfy all constraints, whereas all other methods have constraint violations. Both HardNet and CAffNet-FF have a small optimality gap relative to IPOPT, with CAffNet-FF obtaining the smallest gap on average.
Further, note that equality constraints are more challenging to satisfy, such that NN and HardNet almost violate all equality constraints.
Finally, when comparing the testing times, both CAffNet variants are observed to be significantly faster than the benchmark optimizer IPOPT while ensuring constraint satisfaction.
We also implement a lite version of CAffNet-FF, which considers a reduced set of constraint combinations as described in Remark~\ref{remark:scalability}, and it achieves the same performance as the full CAffNet-FF while further reducing training and testing time by approximately 45\% compared to the full constraint combination version.

\subsection{Learning Control Policies in Safety-Critical Systems}\label{sec:exp_cbf}

\begin{figure}[t]
    \centering
    \includegraphics[width=0.925\linewidth,trim=0mm 5mm 0mm 5mm]{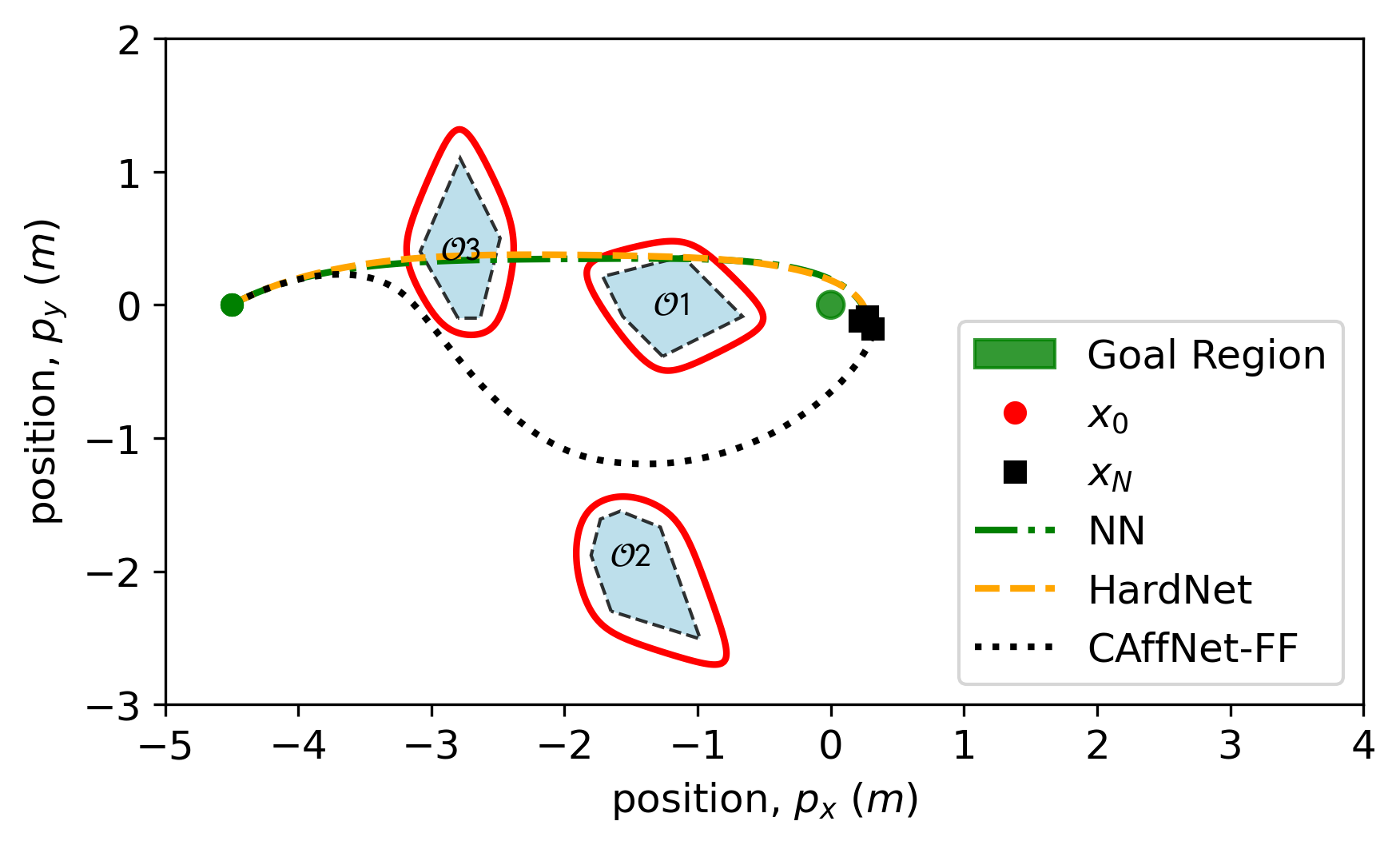}%
    \caption[]{\small Trajectories of learning control policies for safety-critical systems. Both CAffNet-FF and HardNet arrive in the vicinity of the goal region. However, only CAffNet-FF strictly satisfies all safety constraints during the entire trajectory, while HardNet collides with the obstacle. NN with soft constraints also collides with the obstacle and goes far away from the goal region.}
    \label{fig:cbf_trajectory}
\end{figure}

\begin{figure*}[t]
    \centering
    \begin{subfigure}[t]{0.33\textwidth}
        \centering
        \includegraphics[width=\linewidth,trim=0mm 3mm 0mm 0mm]{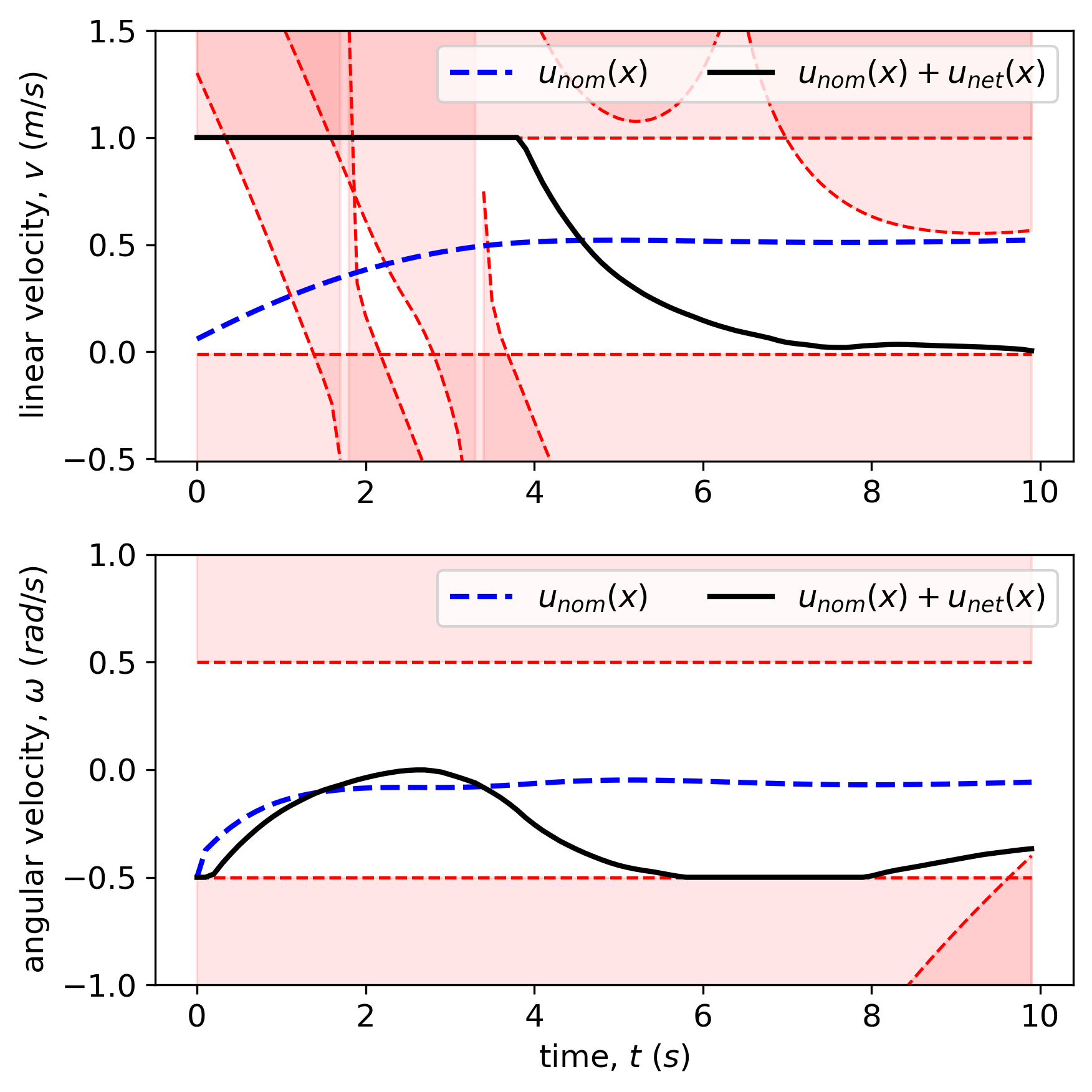}
        \caption[]{\small NN.}
        \label{fig:NN_u}
    \end{subfigure}\hfill
        \begin{subfigure}[t]{0.33\textwidth}
        \centering
        \includegraphics[width=\linewidth,trim=0mm 3mm 0mm 0mm]{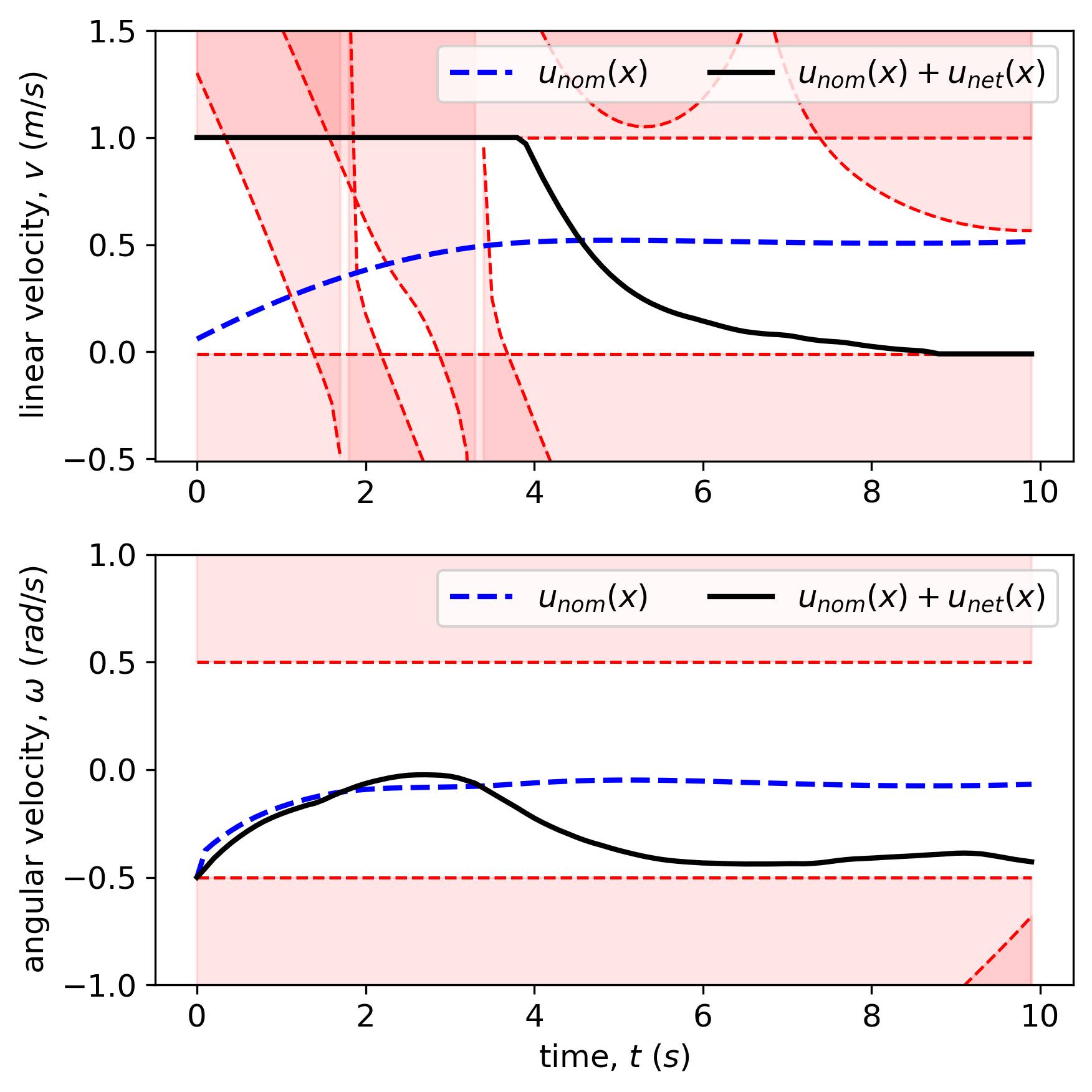}
        \caption[]{\small HardNet.}
        \label{fig:HardNet_u}
    \end{subfigure}\hfill
    \begin{subfigure}[t]{0.33\textwidth}
        \centering
        \includegraphics[width=\linewidth,trim=0mm 3mm 0mm 0mm]{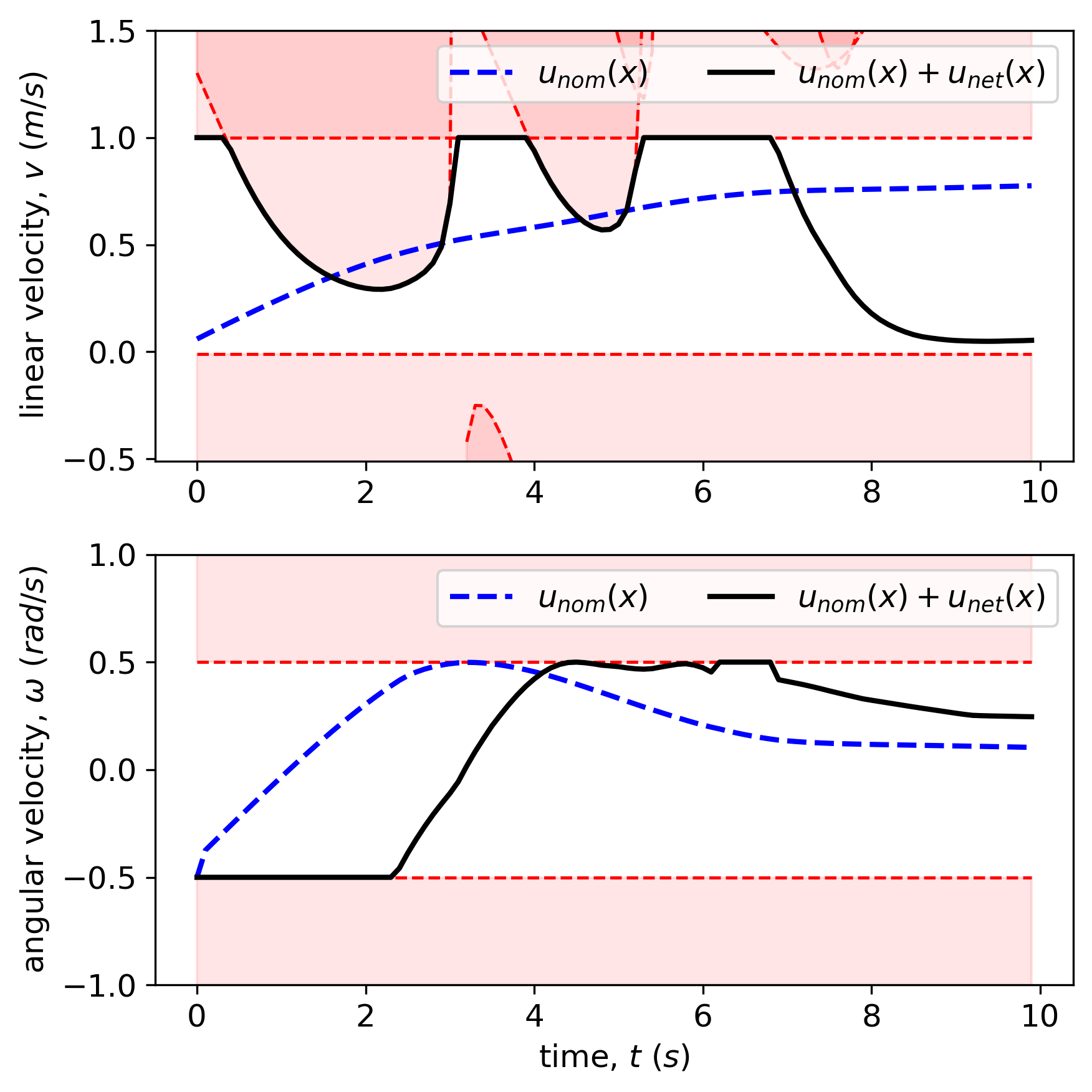}
        \caption[]{\small CAffNet-FF.}
        \label{fig:Caff_u}
    \end{subfigure}%
    \caption[]{Comparison of control inputs for learning control policies in safety-critical systems. NN and HardNet choose a relatively high linear velocity in the beginning, even when close to an obstacle, leading to constraint violations in the linear velocity and consequently colliding with the obstacle. In contrast, CAffNet-FF selects safe control commands that satisfy all constraints throughout the trajectory.}
    \label{fig:cbf_control_inputs}
\end{figure*}

In this experiment, we learn control policies for a unicycle model with input-dependent affine constraints for enforcing safety. Consider a general control-affine dynamical system of the form $\dot{x} = f(x) + g(x) u(x)$, where $x\in \mathbb{R}^{n_{in}}$ and $u(x)\in \mathbb{R}^{n_{out}}$. The unicycle robot dynamics is as follows:
\begin{align*}
\dot{x} &= \begin{bmatrix}
        0 \\
        0 \\
        0
    \end{bmatrix} + \begin{bmatrix}
        \cos(\theta) & 0 \\
        \sin(\theta) & 0 \\
        0 & 1
    \end{bmatrix} u(x),
\end{align*}
with $x = [p_x, p_y, \theta]^\top$ and $u(x) = [v, \omega]^\top$. $(p_x, p_y)$ represent the position of the unicycle, $\theta$ is its orientation, $v$ is the linear velocity command, and $\omega$ is the angular velocity command. The nominal controller $u_{nom}(x)$ is a PID controller without considering safety conditions, and the neural network output is performed as a correction to the nominal control command, i.e., the overall command to the unicycle is chosen as $u(x) = u_{nom}(x) + u_{net}(x)$. The state constraints are $-5~m \leq p_x \leq 1~m$, $-4~m\leq p_y \leq 2~m$, and $- \pi~rad \leq \theta \leq \pi~rad$, while the control command constraints are $-0.01~m/s \leq v \leq 1~m/s$ and $-0.5~rad/s \leq \omega \leq 0.5~rad/s$. We saturate the nominal and unicycle control command to satisfy the control bounds, modeling realistic actuator saturation. The goal position is located at
$(0,0)$, and the agent is considered to have arrived once it enters a circle of radius $0.1~m$. We set $\Delta t$ as $0.1~s$, and randomly generated 300 initial states within the state constraints for training, with orientations towards the origin.
For each epoch during the training, we simulate for $15~s$ and compute the cost that encourages the robot to move toward $(0,0)$ while keeping both the state magnitude and the corrective control effort small:
\begin{align*}J=\sum_{k=0}^{N-1} \left( x_k^\top Q x_k + u_{net}(x_k)^\top R u_{net}(x_k) \right) + x_N^\top Q_N x_N, \end{align*}%
where $N$ is the total number of receding horizon steps, $Q = \text{diag}(1000, 1000, 0)$, $R = \text{diag}(1, 1)$, and $Q_N = \text{diag}(10^6, 10^6, 0)$.
Moreover, we consider three convex obstacles with the hyperplane representation, i.e., $$\mathcal{O}_j = \{x \in \mathbb{R}^2 \mid A_j x \le b_j\}, \quad j \in J = \{1,2,3\}.$$ For each obstacle $\mathcal{O}_j$, each row of $A_j$ and $b_j$ represents a line that aligns with its corresponding edge, e.g., $a_j^i x = b_j^i$, where $i \in M_j = \{1, 2, \dots, m_j\}$ indicates the $i$-th edge, and $m_j$ is the total number of edges/constraints for obstacle $\mathcal{O}_j$. Thus, the halfspace being outside of the corresponding obstacle edge can be represented by a single control barrier function candidate $h_j^i(x) = a_j^i x - b_j^i \geq 0$.
Then the collision-free condition is that the position lies outside at least one edge-aligned constraints, i.e., $\max_{i \in M_j} h_j^i(x) \geq 0$, which can be enforced by the union of $h_j^i(x)$ for each $i$-th edge of the obstacle $\mathcal{O}_j$. Thus, we use a smooth function
over-approximation in \citep[Eq. (26)]{molnar2023composing} to convert $x\in \mathcal{X} \setminus \mathcal{O}_j$ into a single constraint $h_j(x)$:
\begin{align*}
  h_j(x) = \frac{1}{\kappa} \ln\left( \sum_{i \in M_j } e^{\kappa h_j^i(x)} \right) - \frac{\eta_{j}}{\kappa},
\end{align*}
where $\kappa =10$ is a positive constant to adjust the tightness and smoothness of the over-approximation, and $\eta_j = \ln(m_j)$ is the over-approximation buffer.
The smooth union approximation satisfies $h_j(x) \leq \max_{i \in M_j} h_j^i(x)$. Hence, the sufficient condition for collision avoidance is $h_j(x) \geq 0, \forall j \in J$.
Safety (i.e., collision avoidance) is then enforced using control barrier functions (CBFs) \cite{ames2016control} that take the following form $\forall j\in\{1,2,3\}$:
\begin{align*}
    L_f  h_j(x)+ L_g h_j(x)
    u(x)
    \geq  -\alpha(h_j(x)),
\end{align*}
where $\alpha(\cdot)$ is a class $\mathcal{K}$ function that is strictly increasing with $\alpha(0)=0$, and we set $\alpha(h(x)) = h(x)$ in this experiment. The Lie derivatives are given by
\begin{align*}
    L_f h_j(x) &= \sum_{i \in M_j} \lambda_i(x) L_f h_j^i(x), \\
    L_g h_j(x) &= \sum_{i \in M_j} \lambda_i(x) L_g h_j^i(x),
\end{align*}
with $\lambda_j^i(x) = \mathrm{e}^{\kappa (h_j^i(x) - h_j(x))}$.
The state constraints $h_x(x) = b_x - A_x x \geq 0$ can also be similarly converted to input-dependent affine constraints with CBFs, and the control constraints are also assumed to be affine: $A_u u(x) \leq b_u$.
Thus, the aggregated input-dependent affine constraint is in the form of \eqref{eq:constraints} with
\begin{align*}
    A(x) \!=\! \begin{bmatrix}
        -L_g h_1(x) \\
        -L_g h_2(x) \\
        -L_g h_{3}(x) \\
        -L_g h_x(x) \\
        A_u
    \end{bmatrix}\!,
    b(x)\! =\! \begin{bmatrix}
        L_f h_1(x) + \alpha(h_1(x)) \\
        L_f h_2(x) + \alpha(h_2(x)) \\
        L_f h_{3}(x) + \alpha(h_{3}(x)) \\
        L_f h_x(x) + \alpha(h_x(x)) \\
        b_u
    \end{bmatrix}\!.
    \end{align*}

In this experiment, for brevity, we omitted CAffNet-TF\@. Further details of the problem are given in Appendix~\ref{sec:cbf_detail}. From Figs.~\ref{fig:cbf_control_inputs}--\ref{fig:cbf_trajectory} and Table~\ref{cbf-table}, we can observe that the unicycle robot using CAffNet-FF successfully arrives near the goal region without any collisions and control constraint violations, while NN and HardNet collide with obstacle $\mathcal{O}_1$ and $\mathcal{O}_3$.

Finally, to demonstrate the value of jointly training NNs with the CAffine projection layer, we also enforced the projection on the output of the unconstrained NN \emph{a posteriori}, and found that in this case, the robot gets stuck near an obstacle and never makes it to the goal.
Thus, CAffNet is trainable and can find the optimal feasible solution.

\section{Conclusion}
We introduced CAffNet, a novel neural network framework that guarantees hard satisfaction of input-dependent affine constraints with a closed-form architecture. By leveraging constraint decomposition and a trainable null-space projection, our method resolves limitations on constraint cardinality and ensures feasibility whenever the constraint/solution set is non-empty.
Further, we proved that CAffNet is a universal approximator, allowing the underlying neural networks to retain their expressive power while adhering to (safety) constraints. Future work will focus on extending this framework to support non-affine constraints and validating its performance on physical robotic platforms.
Improvement of the scalability of CAffNet is also an important future direction.

\section*{Acknowledgements}

This work was supported by the National Science Foundation (NSF) under
Grant No.~CNS-2313814, by the InnoCORE program of the Ministry of Science and
ICT (\#N10250155), and by the Institute of Information \& communications
Technology Planning \& Evaluation (IITP) grant funded by the Korean government
(MSIT) (No.RS-2020-II201336, Artificial Intelligence Graduate School Program (UNIST)).

\section*{Impact Statement}

This work aims to improve the reliability of neural networks by embedding hard
affine constraint satisfaction directly into the model architecture. Its
potential positive impact is to support safer use of learned models in domains
where outputs must satisfy operational or safety constraints, such as robotics,
control, and other safety-critical systems. By providing formal guarantees under
the stated assumptions, the proposed framework may reduce constraint violations
compared with penalty-based training or post-processing methods.

At the same time, these guarantees apply only to the specified constraints and
their feasibility assumptions. They do not by themselves ensure that the
constraints are complete, that the system model is accurate, or that deployment
conditions match the training and evaluation settings. Misuse or overreliance
on the method could therefore lead to unsafe behavior if relevant hazards are
omitted or incorrectly modeled. Deployment in safety-critical applications
should include independent validation, monitoring, and domain-specific safety
analysis.

\bibliography{references}
\bibliographystyle{icml2026}

%%%%%%%%%%%%%%%%%%%%%%%%%%%%%%%%%%%%%%%%%%%%%%%%%%%%%%%%%%%%%%%%%%%%%%%%%%%%%%%
%%%%%%%%%%%%%%%%%%%%%%%%%%%%%%%%%%%%%%%%%%%%%%%%%%%%%%%%%%%%%%%%%%%%%%%%%%%%%%%
% APPENDIX
%%%%%%%%%%%%%%%%%%%%%%%%%%%%%%%%%%%%%%%%%%%%%%%%%%%%%%%%%%%%%%%%%%%%%%%%%%%%%%%
%%%%%%%%%%%%%%%%%%%%%%%%%%%%%%%%%%%%%%%%%%%%%%%%%%%%%%%%%%%%%%%%%%%%%%%%%%%%%%%
\newpage
\appendix
\onecolumn
% \section{You \emph{can} have an appendix here.}

% You can have as much text here as you want. The main body must be at most $8$
% pages long. For the final version, one more page can be added. If you want, you
% can use an appendix like this one.

% The $\mathtt{\backslash onecolumn}$ command above can be kept in place if you
% prefer a one-column appendix, or can be removed if you prefer a two-column
% appendix.  Apart from this possible change, the style (font size, spacing,
% margins, page numbering, etc.) should be kept the same as the main body.
%%%%%%%%%%%%%%%%%%%%%%%%%%%%%%%%%%%%%%%%%%%%%%%%%%%%%%%%%%%%%%%%%%%%%%%%%%%%%%%
%%%%%%%%%%%%%%%%%%%%%%%%%%%%%%%%%%%%%%%%%%%%%%%%%%%%%%%%%%%%%%%%%%%%%%%%%%%%%%%
\section{Proof of Lemma~\ref{lemma:feasible_sol_proj} (Existence of a Feasible Solution)}
\label{appendix:proof_feasible_sol_proj}
\begin{proof}
By Assumption~\ref{assumption:cont_feas_const}, the feasible set $\mathcal{S}(x)$ is a non-empty (convex) polyhedron.
Then, there exists at least one minimal face $F(x)$ of $\mathcal{S}(x)$ (cf. Definition~\ref{theorem:minimal_face}), such that $F(x) = \{y \in \mathbb{R}^n \mid A_{\gamma^*}(x) y = b_{\gamma^*}(x)\}$, where $A_{\gamma^*}(x)$ and $b_{\gamma^*}(x)$ is a sub-constraint of $A(x)$ and $b(x)$ corresponding to the active constraints that define the minimal face $F(x)$.
Furthermore, since $A_{\gamma^*}(x)$ and $b_{\gamma^*}(x)$ form a minimal face, it is formed by $k^*$ linearly independent constraints and $k^* \le n_{out}$. In addition, $k^* \leq m$ as there are at most $m$ constraints to choose from. Thus, since CAffNet constructs the set $\Gamma$ by choosing all combinations of indices of size up to $\min(m, n_{out})$, this specific index set $\gamma^*$ with corresponding $k^*$ constraints is guaranteed to be within $\Gamma$. Then, for this index set $\gamma^*$, CAffNet generates a candidate $\mathcal{P}_{\gamma^*}(x)$ via the projection:
\begin{align}
    \label{eq:minimal_face_proj}
    \mathcal{P}_{\gamma^*}(x) = f_{\theta}(x) - A_{\gamma^*}^{\dagger}(x)(A_{\gamma^*}(x)f_{\theta}(x) - b_{\gamma^*}(x)) + (I - A_{\gamma^*}^{\dagger}(x)A_{\gamma^*}(x))w_{\phi}(x).
\end{align}
We now verify that this candidate satisfies the equality constraints of the minimal face. For brevity, we omit the argument $x$ in the rest of this derivation.
Multiplying (\ref{eq:minimal_face_proj}) by $A_{\gamma^*}$ and using the property of Moore-Penrose pseudoinverses that $A A^{\dagger} A = A$ holds, we have
\begin{align}
    A_{\gamma^*} \mathcal{P}_{\gamma^*} &= A_{\gamma^*} f_{\theta} - A_{\gamma^*} A_{\gamma^*}^{\dagger}(A_{\gamma^*}f_{\theta} - b_{\gamma^*}) + A_{\gamma^*}(I - A_{\gamma^*}^{\dagger}A_{\gamma^*})w_{\phi}\\
    &= A_{\gamma^*} A_{\gamma^*}^{\dagger} b_{\gamma^*}.
\end{align}
Since $F(x)$ is a non-empty face, the system $A_{\gamma^*} y = b_{\gamma^*}$ has at least one solution. A necessary and sufficient condition for the consistency of a linear system $Ax=b$ is $A A^{\dagger} b = b$. Therefore, we obtain
\begin{equation}
    A_{\gamma^*} \mathcal{P}_{\gamma^*} = b_{\gamma^*}.
\end{equation}
This implies that $\mathcal{P}_{\gamma^*}(x) \in F(x)$. Since $F(x)$ is a face of $\mathcal{S}(x)$, it is a subset of $\mathcal{S}(x)$ ($F(x) \subseteq \mathcal{S}(x)$). Consequently, any point in $F(x)$ satisfies the full set of inequalities, i.e.,
\begin{equation}
    A(x) \mathcal{P}_{\gamma^*}(x) \le b(x).
\end{equation}
Thus, the projection set
$\{\mathcal{P}_{\gamma}(x) \mid \gamma \in \Gamma\}$
contains at least one feasible solution $\mathcal{P}_{\gamma^*}(x)$ satisfying the constraints.
\end{proof}

\section{Proof of Theorem~\ref{theorem:const_sat_of_CAffNet} (Hard Constraint Satisfaction of CAffNet)}\label{appendix:proof_const_sat_of_CAffNet}
\begin{proof}
For any input $x \in \mathbb{R}^{n_{in}}$, under Assumption~\ref{assumption:cont_feas_const}, the feasible region $\mathcal{S}(x) = \{y \in \mathbb{R}^{n_{out}} \mid A(x)y \le b(x)\}$ is non-empty. Further, by Lemma~\ref{lemma:feasible_sol_proj}, there exists at least one candidate $y \in \{\mathcal{P}_{\gamma}(x) \mid \gamma \in \Gamma\}$, such that $A(x)y \le b(x)$. By the definition of the candidate set $\mathcal{S}_{\mathcal{P}}(x)$ in~\eqref{eq:feasible_proj_set}, this implies $y \in \mathcal{S}_{\mathcal{P}}(x)$, and therefore $\mathcal{S}_{\mathcal{P}}(x)$ is non-empty. And from~\eqref{eq:y_star},
the CAffNet output $\mathcal{P}^*(x)$ is either $f_\theta(x)$ if it satisfies the input-dependent affine constraints, or one of the candidates in $\mathcal{S}_{\mathcal{P}}(x)$.
In both cases, the output $\mathcal{P}^*(x)$ satisfies the constraints.
\end{proof}

\section{Proof of Theorem~\ref{theorem:universal_approximation_CAffNet} (Universal Approximation of CAffNet)}
\label{appendix:proof_universal_approx}
\begin{proof}
Given the safe region
$\mathcal{S}(x) := \{\, y \in \mathbb{R}^{n_{out}} \mid A(x)y \le b(x)\,\}$,
let $f_t \in \mathcal{F}_{target}$ be any target function satisfying the constraints, i.e., $f_t(x) \in \mathcal{S}(x)$. For any $\epsilon > 0$, since (an unconstrained) $\mathcal{F_\theta}$ universally approximates $\mathcal{F}$ for any norm $p \in [1, \infty)$, there exists a network $f_\theta \in \mathcal{F_\theta}$ such that $\|f_\theta(x) - f_t(x)\|_p <
K$, where $ K = \dfrac{\epsilon}{3 + 3\sqrt{n_{out}}}$ is a constant. We now prove that the CAffNet output $\mathcal{P}^*(x)$ satisfies $\|\mathcal{P}^*(x) - f_t(x)\|_p < \epsilon$, i.e., inherits the universal approximation property.

We analyze the CAffNet output $\mathcal{P}^*(x)$ for two cases:

\textbf{Case 1: $f_\theta(x) \in \mathcal{S}(x)$.} \\
By the definition of the CAffNet projection in~\eqref{eq:y_star}, if the argument is feasible, CAffNet outputs the argument itself:
\begin{equation}
    \mathcal{P}^*(x) = f_\theta(x).
\end{equation}
Thus, the approximation error is strictly bounded by the (unconstrained) approximation error, i.e.,
\begin{equation}
    \|\mathcal{P}^*(x) - f_t(x)\|_p = \|f_\theta(x) - f_t(x)\|_p <
    K
    < \epsilon.
\end{equation}

\textbf{Case 2: $f_\theta(x) \notin \mathcal{S}(x)$.} \\
Since $f_t(x) \in \mathcal{S}(x)$ and $f_\theta(x) \notin \mathcal{S}(x)$, the line that connects $f_\theta(x)$ and $f_t(x)$ must intersect with the boundary of $\mathcal{S}(x)$ at least once. Suppose that the intersection point is $f_\gamma(x)$, which satisfies a sub-constraint defined by the index set $\gamma$, i.e., $A_\gamma f_\gamma(x) = b_\gamma(x)$. As $f_\gamma(x)$ is between $f_\theta(x)$ and $f_t(x)$, we have $f_\gamma(x) = \delta f_t(x) + (1-\delta) f_\theta(x)$ for some $\delta \in (0,1]$. Then
\begin{align}\label{eq:const_f_gamma}
    \begin{array}{rl}
        \|f_\gamma(x) - f_\theta(x)\|_p
        &= \|\delta f_t(x) + (1-\delta) f_\theta(x) - f_\theta(x)\|_p \\
        &= \|\delta (f_t(x) - f_\theta(x))\|_p \\
        &\leq \|f_t(x) - f_\theta(x)\|_p <
        K.
    \end{array}
\end{align}
Let the target combination-specific null-space vector be $w_{t, \gamma}(x)$ such that $\|w_{t, \gamma}(x)\|_p \leq \|f_\theta(x) - f_t(x)\|_p < K$, which always exists since $w_{t, \gamma}(x) = 0$ satisfies the inequality. Additionally, there also exists a single continuous target function $w_t(x)$ such that $\|w_t(x)\|_p \leq \|w_{t, \gamma}(x)\|_p$ for all combinations $\gamma$, which again exists since $w_t(x) = 0$ satisfies the inequality. From the universal approximation theorem of NNs for the null space term in \eqref{eq:projSub}, there exists a learned null-space vector $w_{\phi}(x)$ that satisfies $\|w_{\phi}(x) - w_t(x)\|_p <
K
$,
from which we can bound the learned null-space vector as
\begin{align}\label{eq:bound_w_phi}
    \begin{array}{rl}
        \|w_\phi(x)\|_p
        &=\|w_\phi(x) - w_{t}(x)+w_{t}(x)\|_p \\
        & \le \|w_\phi(x) - w_{t}(x)\|_p+\|w_{t}(x)\|_p \\
        & \le \|w_\phi(x) - w_{t}(x)\|_p+\|w_{t,\gamma}(x)\|_p \\
        &<2K.
    \end{array}
\end{align}
We then first evaluate bounds of $A_\gamma^\dagger A_\gamma$ and $I - A_\gamma^\dagger A_\gamma$. Given the definition of the matrix norm $\|A\|_p = \max_{v \neq 0} \frac{\|Av\|_p}{\|v\|_p}$ for a matrix $A \in \mathbb{R}^{m \times n}$, we utilize the norm equivalence of a vector $v \in \mathbb{R}^{n}$ that $\|v\|_q \leq \|v\|_r \leq n^{\frac{1}{r} - \frac{1}{q}}\|v\|_q$, for norm $1 \leq r \leq q$.
If $p=1$, let $r = 1$, $q = 2$ and then we have
$$
\|v\|_2 \leq \|v\|_1 \leq \sqrt{n}\|v\|_2.
$$
Applying this vector norm to the matrix norm, we have
$$
\|A\|_1 = \max_{v \neq 0} \frac{\|Av\|_1}{\|v\|_1} \leq \max_{v \neq 0} \frac{\sqrt{m}\|Av\|_2}{\|v\|_2} \leq \sqrt{m}\max_{v \neq 0} \frac{\|Av\|_2}{\|v\|_2} = \sqrt{m}\|A\|_2.
$$
If $p \geq 2$, let $r = 2$, $q = p$ and then we have
$$
\|v\|_p \leq \|v\|_2 \leq n^{\frac{1}{2} - \frac{1}{p}}\|v\|_p \leq \sqrt{n}\|v\|_p.
$$
Applying this vector norm to the matrix norm, we have
$$
\|A\|_p = \max_{v \neq 0} \frac{\|Av\|_p}{\|v\|_p} \leq \max_{v \neq 0} \frac{\sqrt{n}\|Av\|_2}{\|v\|_2} \leq \sqrt{n}\max_{v \neq 0} \frac{\|Av\|_2}{\|v\|_2} = \sqrt{n}\|A\|_2.
$$
Since $A_\gamma^\dagger A_\gamma$ and $I - A_\gamma^\dagger A_\gamma$ are square matrices in $\mathbb{R}^{n_{out} \times n_{out}}$, as well as from the properties of Moore-Penrose pseudoinverses that $\| A_\gamma^\dagger A_\gamma \|_2 \leq 1$ and $\| I - A_\gamma^\dagger A_\gamma \|_2 \leq 1$, we have
\begin{align}\label{eq:bound_matrix_norm}
    \begin{array}{rl}
        \|A_\gamma^\dagger A_\gamma\|_p &\leq \sqrt{n_{out}}\|A_\gamma^\dagger A_\gamma\|_2 \leq \sqrt{n_{out}}, \\
        \|I - A_\gamma^\dagger A_\gamma\|_p &\leq \sqrt{n_{out}}\|\mathcal{I} - A_\gamma^\dagger A_\gamma\|_2 \leq \sqrt{n_{out}}.
    \end{array}
\end{align}

Now we analyze the distance between the corresponding projection $\mathcal{P}_{\gamma}(x)$ in \eqref{eq:projSub} and the target, which can be bounded as follows:
\begin{align*}
    \|f_t - \mathcal{P}_{\gamma}\|_p
    &= \|f_t - \left( f_\theta - A_\gamma^\dagger(A_\gamma f_\theta - b_\gamma) + (I - A_\gamma^\dagger A_\gamma)w_\phi \right) \|_p \\
    &= \|f_t - \left( f_\theta - A_\gamma^\dagger(A_\gamma f_\theta - A_\gamma f_\gamma) + (I - A_\gamma^\dagger A_\gamma)w_\phi \right) \|_p \\
    &= \|(f_t - f_\theta) + A_\gamma^\dagger A_\gamma(f_\theta - f_\gamma) - (I - A_\gamma^\dagger A_\gamma)w_\phi \|_p \\
    &\le \|f_t - f_\theta\|_p + \|A_\gamma^\dagger A_\gamma\|_p\|f_\theta - f_\gamma\|_p + \|I - A_\gamma^\dagger A_\gamma\|_p\|w_\phi\|_p\\
    &\le \|f_t - f_\theta\|_p + \sqrt{n_{out}}\|f_\theta - f_t\|_p + \sqrt{n_{out}}\|w_\phi\|_p \\
    &<
    (1+3\sqrt{n_{out}})K
    .
\end{align*}
where, for brevity, we simplify the input-dependent notation by omitting the argument $x$. The second equality is obtained since the intersection point $f_\gamma$ satisfies $b_\gamma=A_\gamma f_\gamma$, while the first inequality is obtained using triangle inequality. The second inequality follows from the inequality in \eqref{eq:const_f_gamma} as well as the matrix norm bounds in~\eqref{eq:bound_matrix_norm}. Finally, the overall bound is obtained based on the distance between $f_t$ and $f_\theta$ by the universal approximation property and the learned null-space function bound in~\eqref{eq:bound_w_phi}.

Further, since  $\mathcal{P}^*(x)$ in \eqref{eq:y_star} is selected as the closest feasible candidate to $f_\theta(x)$, we have
\begin{align*}
    \|\mathcal{P}^* - f_\theta\|_p
    &\leq \|\mathcal{P}_{\gamma} - f_\theta\|_p \\
    &= \|\mathcal{P}_{\gamma} - f_t + f_t - f_\theta\|_p \\
    &\leq \|\mathcal{P}_{\gamma} - f_t\|_p + \|f_t - f_\theta\|_p \\
    &< (2+3\sqrt{n_{out}})K,
\end{align*}
which implies that
\begin{align*}
    \|\mathcal{P}^* - f_t\|_p
    &= \|\mathcal{P}^* - f_\theta + f_\theta - f_t\|_p \\
    &\leq \|\mathcal{P}^* - f_\theta\|_p + \|f_\theta - f_t\|_p \\
    &< (3+3\sqrt{n_{out}})K = \epsilon.
\end{align*}
Thus, CAffNet preserves the universal approximation property.
\end{proof}

\section{Experimental Details}
\subsection{Details for Learning Nonlinear Function with Piecewise Constraints}\label{sec:pwc_detail}
The piecewise target function is defined as:
\begin{align*}
    f(x) = \begin{cases}
    -5 \sin(\frac{\pi}{2}(x+1))-2, & \text{if } x \in (-\infty, -1], \\
    -2, & \text{if } x\in (-1, 0], \\
    2 - 9 (x - \frac{2}{3})^2, & \text{if } x\in(0, 1], \\
    \frac{3}{x^2}-2. & \text{if } x \in (1, \infty).
    \end{cases}
\end{align*}
There are two piecewise upper bounds:
\begin{align*}
    g_1^u(x) = \begin{cases}
    -3 \sin(\frac{\pi}{2}(x+1))+\frac{1}{5}, & \text{if } x \in (-\infty, -1], \\
    -2, & \text{if } x\in (-1, 0], \\
    3-4(x-\frac{1}{2})^2, & \text{if } x\in(0, 1], \\
    2, & \text{if } x \in (1, \infty),
    \end{cases},\quad
    g_2^u(x) = \begin{cases}
    -3 \sin(\frac{\pi}{2}(x+1))^3+1, & \text{if } x \in (-\infty, -1], \\
    2, & \text{if } x\in (-1, 0], \\
    3-4(x-\frac{4}{5})^2, & \text{if } x\in(0, 1], \\
    2.5, & \text{if }  x \in (1, \infty),
    \end{cases}
\end{align*}
as well as two piecewise lower bounds:
\begin{align*}
    g_1^l(x) = \begin{cases}
    5 \sin(\frac{\pi}{2}(x+1))^2-3, & x \in (-\infty, -1], \\
    -2, & x\in (-1, 0], \\
    (4-9(x-\frac{2}{3})^2)x-\frac{5}{2}, & x\in(0, 1], \\
    \frac{3}{x^3}-\frac{5}{2}, & x \in (1, \infty),
    \end{cases},\quad
    g_2^l(x) = \begin{cases}
    5\sin(\frac{\pi}{2}(x+1))^8-2, & \text{if } x \in (-\infty, -1], \\
    -3, & \text{if } x\in (-1, 0], \\
    (5-4(x-\frac{1}{6})^2)x-\frac{5}{2}, & \text{if } x\in(0, 1], \\
    \frac{3}{2x^3}-\frac{16}{9}, & \text{if } x \in (1, \infty).
    \end{cases}
\end{align*}
From the above, the (two-sided) affine constraints for HardNet are
\begin{align*}
    b^l(x) = \begin{bmatrix}
    g_1^l(x) \\ g_2^l(x)
    \end{bmatrix}, \quad
    A(x) = \begin{bmatrix}
    1 \\ 1
    \end{bmatrix}, \quad
    b^u(x) = \begin{bmatrix}
    g_1^u(x) \\ g_2^u(x)
    \end{bmatrix},
\end{align*}
and equivalently, the affine constraints for all other methods are
\begin{align*}
    A(x) = \begin{bmatrix}
    1 \\ 1 \\ -1 \\ -1
    \end{bmatrix}, \quad
    b(x) = \begin{bmatrix}
    g_1^u(x) \\ g_2^u(x) \\-g_1^l(x) \\ -g_2^l(x)
    \end{bmatrix}.
\end{align*}

\subsection{Details for Learning Optimization Solver}\label{sec:opt_detail}

For HardNet, we need to concatenate the constraints as upper and lower bounds:
\begin{align*}
    b^l(x) = \begin{bmatrix} -M \mathbbm{1}_{n_{ineq}} \\ x \end{bmatrix}, \quad
    A(x) = \begin{bmatrix} G \\ C \end{bmatrix}, \quad
    b^u(x) = \begin{bmatrix} h \\ x \end{bmatrix},
\end{align*}
where $M$ is a large positive constant, while for the other approaches, the constraints are
\begin{align*}
    A(x) = \begin{bmatrix} G \\ C \\ -C \end{bmatrix}, \quad
    b(x) = \begin{bmatrix} h \\ x \\ -x \end{bmatrix}.
\end{align*}

\subsection{Details for Learning Control Policies for Safety-critical Systems}\label{sec:cbf_detail}
With the variable ranges given in Section~\ref{sec:exp_cbf}, the state constraints are
\begin{align*}
    A_x = \begin{bmatrix}
        1 & 0 & 0 \\
        -1 & 0 & 0 \\
        0 & 1 & 0 \\
        0 & -1 & 0 \\
        0 & 0 & 1 \\
        0 & 0 & -1
    \end{bmatrix}, \quad
    b_x = \begin{bmatrix}
        1 \\5 \\ 2 \\4 \\ \pi \\ \pi
    \end{bmatrix},
\end{align*}
and the control input constraints are
\begin{align*}
    A_u = \begin{bmatrix}
        1 & 0 \\
        -1 & 0 \\
        0 & 1 \\
        0 & -1
    \end{bmatrix}, \quad
    b_u = \begin{bmatrix}
        1 \\ 0.01 \\ 0.5 \\ 0.5
    \end{bmatrix}.
\end{align*}
Moreover, the three obstacles placed in the environment are $\mathcal{O}_j = \{x \in \mathbb{R}^2 \mid A_j x \le b_j\}$, $j \in J=\{1,2,3\}$, where $A_j$ and $b_j$ are given by
\begin{align*}
    A_1 &= \begin{bmatrix} 0.4472 & -0.8944 \\ 0.7071 & 0.7071 \\ -0.2425 & 0.9701 \\ -0.7071 & -0.7071 \\ -0.8944 & -0.4472 \end{bmatrix} ,\quad b_1 = \begin{bmatrix} -0.2184 \\ -0.5303 \\ 0.6219 \\ 1.1667 \\ 1.4368 \end{bmatrix}, \\
    A_2 &= \begin{bmatrix} -0.9685 & 0.2489 \\ 0.9417 & 0.3363 \\ -0.3714 & 0.9285 \\ 0.3714 & 0.9285 \\ -0.9417 & -0.3363 \\ -0.2976 & -0.9547 \end{bmatrix} ,\quad b_2 = \begin{bmatrix} 1.2755 \\ -1.7670 \\ -0.8511 \\ -2.0249 \\ 2.3274 \\ 2.6868 \end{bmatrix}, \\
    A_3 &= \begin{bmatrix} -0.9191 & 0.3939 \\ 0.8944 & 0.4472 \\ 0.9703 & -0.2419 \\ -0.8701 & -0.4930 \\ 0.0000 & -1.0000 \end{bmatrix} ,\quad b_3 = \begin{bmatrix} 2.9916 \\ -1.9975 \\ -2.5305 \\ 2.4854 \\ 0.1000 \end{bmatrix}.
\end{align*}

Fig.~\ref{fig:unicycle_frame_transform} illustrates the coordinate transformation used to compute the tracking error in the robot body frame. The displacement from the robot position $(p_x,p_y)$ to the reference position $(p_{x, ref}, p_{y, ref})$ is decomposed along the body-frame longitudinal and lateral axes, yielding the longitudinal error $e_{long}(t)$ and lateral error $e_{lat}(t)$ used by the nominal controller. The state error $e(t) = [e_{long}(t), e_{lat}(t), e_\theta(t)]^T$ is then given by
\begin{align*}
    e(t) = \begin{bmatrix}
        \cos(\theta) & \sin(\theta) & 0 \\
        -\sin(\theta) & \cos(\theta) & 0 \\
        0 & 0 & 1
    \end{bmatrix}
    (x_{ref}(t) - x(t)).
\end{align*}
Specifically, the nominal PID control input consists of the linear velocity command and the angular velocity command, where the former is computed based on the longitudinal distance error $e_{long}(t)$, and the latter is computed based on the lateral distance error $e_{lat}(t)$ and the heading angle error $e_\theta(t)$. The nominal input is given by
\begin{align*}
    u_{nom} = \begin{bmatrix}
    1 & 0 & 0 \\
    0 & 1 & 1
    \end{bmatrix}
    (K_p e(t) + K_i \int_0^t e(\tau) d\tau + K_d \dot{e}(t)),
\end{align*}
with parameters $K_p = \text{diag}(0.01, 0.2, 0)$, $K_i = \text{diag}(0.05, 0.005, 0)$, and $K_d = \text{diag}(0, 0.01, 0)$.

In addition, the initial state for testing is $x_0 = [-4.5, 0, 0.5]^\top$.

\begin{figure}[t]
    \centering
    \includegraphics[width=0.45\linewidth]{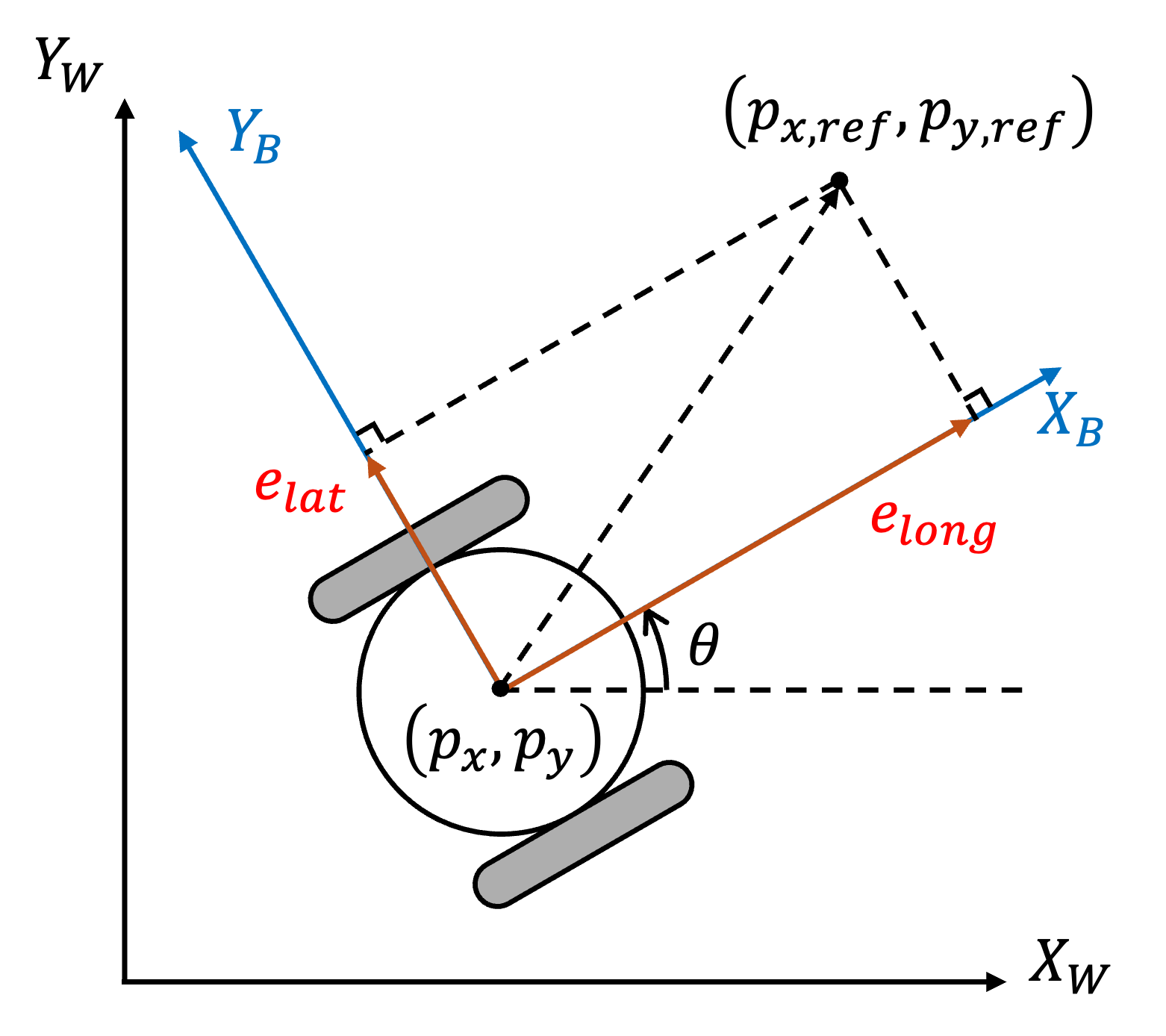}
    \caption{Illustration of the global-to-local coordinate transformation for the unicycle model. The displacement from the robot position $(p_x, p_y)$ to the reference position $(p_{x, ref}, p_{y, ref})$ is projected onto the body-frame longitudinal axis $X_B$ and lateral axis $Y_B$, yielding the tracking errors $e_{\mathrm{long}}$ and $e_{\mathrm{lat}}$, respectively.}
    \label{fig:unicycle_frame_transform}
\end{figure}

\end{document}